\def\eqref#1{equation~\ref{#1}}
\def\1{\bm{1}}
\def\eps{{\epsilon}}
\DeclareMathAlphabet{\mathsfit}{\encodingdefault}{\sfdefault}{m}{sl}
\SetMathAlphabet{\mathsfit}{bold}{\encodingdefault}{\sfdefault}{bx}{n}
\DeclareMathOperator*{\argmin}{arg\,min}
\setlist[itemize]{align=parleft,left=0pt..1em}
\setlist[enumerate]{align=parleft,left=0pt..1em}
\newcommand{\bios}{\textsc{BiasBios}}
\newcommand{\cifar}{\textsc{CIFAR10}}
\def\doop{\mathrm{do}}
\def\NIE{\mathrm{NIE}}
\def\rNDE{\mathrm{rNDE}}
\def\TE{\mathrm{TE}}
\title{An Investigation of Representation and Allocation Harms in Contrastive Learning}
\author{Subha Maity \thanks{Corresponding author. Accompanying codes can be found in \href{https://github.com/smaityumich/CL-representation-harm}{https://github.com/smaityumich/CL-representation-harm}} \\
Department of Statistics\\
University of Michigan\\
Ann Arbor, MI \\
\texttt{smaity@umich.edu} \\
\And
Mayank Agarwal \\
IBM Research\\
MIT-IBM Watson AI lab\\
Cambridge, MA \\
\texttt{mayank.agarwal@ibm.com} \\
\AND
Mikhail Yurochkin \\
IBM Research\\
MIT-IBM Watson AI lab\\
Cambridge, MA \\
\texttt{mikhail.yurochkin@ibm.com} \\
\And
Yuekai Sun\\
Department of Statistics\\
University of Michigan\\
Ann Arbor, MI \\
\texttt{yuekai@umich.edu} \\
}
\begin{document}

\maketitle

\begin{abstract}
The effect of underrepresentation on the performance of minority groups is known to be a serious problem in supervised learning settings; however, it has been underexplored so far in the context of self-supervised learning (SSL). In this paper, we demonstrate that contrastive learning (CL), a popular variant of SSL, tends to collapse representations of minority groups with certain majority groups. We refer to this phenomenon as representation harm and demonstrate it on image and text datasets using the corresponding popular CL methods. Furthermore, our causal mediation analysis of allocation harm on a downstream classification task reveals that representation harm is partly responsible for it, thus emphasizing the importance of studying and mitigating representation harm. Finally, we provide a theoretical explanation for representation harm using a stochastic block model that leads to a representational neural collapse in a contrastive learning setting.
\end{abstract}

\section{Introduction}

\begin{wrapfigure}[14]{r}{0.4\linewidth}
\vspace{-0.22in}
    \centering
    \includegraphics[width=\linewidth]{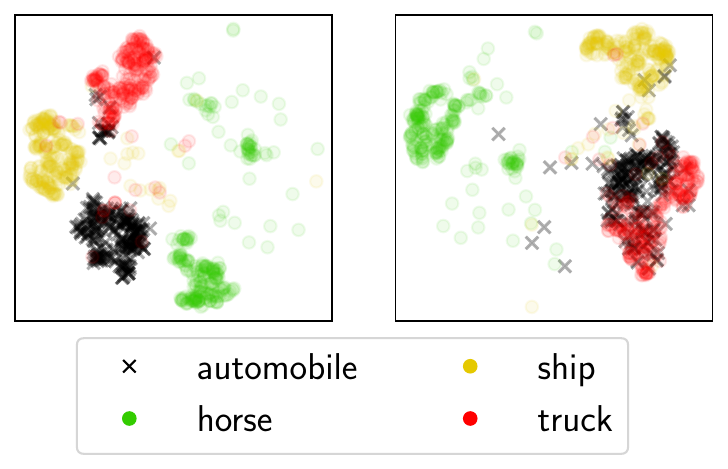}
    \vspace{-0.2in}
    \caption{tSNE visualization of CL representations for balanced (left) and \texttt{automobile} under-represented (right) \cifar\ data. }
    \label{fig:auto-underrepresented}
\end{wrapfigure}

The negative impacts of underrepresentation on the performance of minority groups have been extensively studied in supervised learning. Most prominent reports of algorithmic biases in automated decision-making pipelines \citep{angwin2016Machine,oneil2017Weapons,dastin2018Amazon} focus on disparities in resource allocation by supervised learning algorithms, and there are theoretical models that seek to elucidate the mechanisms behind the disparities \citep{fang2021exploring}. Such disparities cause \emph{allocative harms} to the samples from underrepresented groups, and there are many approaches based on resampling and reweighting \citep{he2008adasyn,ando2017deep,byrd2019effect}, enforcing invariance constraints \citep{agarwal2018Reductions}, re-calibrating the logits \citep{tian2020posterior} to alleviate such allocative harms in supervised learning algorithms.

In contrast to the voluminous literature on allocative harms, representation harms have received less attention. This is because representation harms are harder to measure and their effects are more diffuse and longer-term \citep{oneil2017Weapons}. 
Despite recent advances in representation learning 
\citep{bengio2013representation},
there is a lack of research on algorithmic biases in representation learning, especially studies that look into representation harms they may cause. 
In this paper, we seek to fill this gap in the literature by investigating the effects of group underrepresentation in contrastive learning (CL) algorithms \citep{chen2020simpleCL,chen2021exploring,he2020Momentum}. CL is a popular instance of self-supervised learning (SSL), an approach for learning representations that
can effectively leverage unlabeled samples to improve (downstream) model performances \citep{wang2021large,babu2021xls}. However, these datasets often exhibit inherent imbalances \citep{de-arteaga2019Bias} and it is important to know whether training a CL model on such datasets may detrimentally affect the quality of representations and cause downstream allocative and representation harms.

Here is a preview of the results of our controlled study of underrepresentation with the \cifar\ dataset. In Figure \ref{fig:auto-underrepresented} we plot 2D t-SNE embeddings \citep{van2008visualizing} of CL representations of vehicles in the dataset. We see that the underrepresentation of \texttt{automobile} images in CL training causes their representation to cluster with those of \texttt{trucks}. This is an instance of stereotyping \citep{abbasi2019Fairness} because samples from an underrepresented group are lumped together with those of a similar majority group; thus erasing the unique characteristics of the underrepresented. This is a form of representation harm \citep{crawford2017Trouble}, which can lead to downstream allocation harms (\eg\ missclassifications between \texttt{automobiles} and \texttt{trucks}). In this paper, we show that this mechanism of representation to downstream allocation harms is a general. This complements prior works, which show that underrepresentation leads to allocation harms without identifying the underlying mechanism. The rest of the paper is organized as follows.
\begin{enumerate}
\item In Section \ref{sec:representation-harm} we empirically show that underrepresentation leads to representation harms, as the CL representations of underrepresented groups collapse to semantically similar groups. 
\item In Section \ref{sec:theory} we develop a simple model of CL on graphs that exhibits a collapse of (the learned representations of) underrepresented groups to (those of) semantically similar groups. This suggests that the representation harms we measured are intrinsic to CL representations.
\item In Section \ref{sec:allocation-harm} we show via a (causal) mediation analysis that some of the allocation harms from a linear head/probe (trained on top of the CL representations) can be attributed to the representation harms in the CL representations. Thus, in order to mitigate the allocation harms from ML models built on top of CL representations, we must mitigate the harms in the learned representations.
\end{enumerate}

\subsection{Related works}

The paradigm of \textbf{self-supervised learning (SSL)} allows the use of large-scale unsupervised datasets to train useful representations and has drawn significant attention in modern machine learning (ML) research \citep{misra2020self,jing2020self}. It has found applications in a broad spectrum of areas, such as computer vision \citep{chen2020simpleCL,he2020Momentum}, natural language processing \citep{fang2020cert,liu2021tera,hsu2021hubert}, and many more. In this paper, we investigate the effect of underrepresentation in contrastive learning (CL) \citep{chen2020simpleCL,chen2021exploring,he2020Momentum}, which is a popular variant for self-supervised learning, that uses similar samples to learn SSL representations.
A more extensive discussion of previous work can be found in the surveys \citet{schiappa2023self,yu2023self,liu2022graph} for SSL and \citet{le2020contrastive,kumar2022contrastive} for CL. 
In this paper, we use SimCLR \citep{chen2020simpleCL} and SimSiam \citep{chen2021exploring} for \cifar\ dataset and SimCSE \citep{gao2021simcse} for \bios\ dataset to learn CL representations.   

Several works have \textbf{theoretically studied} the success of \textbf{self-supervised learning} \citep{arora2019theoretical,haochen2021provable,lee2020Predicting,tian2021understanding,tosh2021contrastive}. Our theoretical analysis of CL loss is partly motivated by \citet{fang2021exploring}, who showed that CL representations of a group collapse to a single vector. This phenomenon is known as \textbf{neural collapse}, and it also occurs in supervised learning \citep{papyan2020prevalence,NEURIPS2021_f92586a2,fang2021LayerPeeled,fang2021exploring,hui2022limitations,han2021neural} and transfer learning \citep{galanti2021role}. Our analysis differs from \cite{fang2021exploring} because they study neural collapse when the dataset is balanced, while we focus on imbalanced datasets. This is crucial for studying allocation and representation harms caused by underrepresentation. We show that when the dataset is imbalanced, neural collapse still occurs, but the collapsed points no longer possess a symmetric simplex structure. Instead, the relative positions of the collapsed points depend on the degree of underrepresentation in an intricate manner; we explore the resulting structure in Section \ref{sec:node2vec-thm}. 

\citet{liu2021selfsupervised} showed that under group imbalance, predictive models based on SSL representations cause less allocation harm than models trained in a supervised end-to-end manner. However, their work does not preclude allocation harms by SSL; merely allocation harms are reduced compared to supervised learning. We build on their work by studying the allocation and representation harms caused by SSL.

A discussion of previous work on mitigating representation harm is deferred to the second last paragraph of Section \ref{sec:discussion}.

\section{Under-representation causes stereotyping}
\label{sec:representation-harm}

In Figure \ref{fig:auto-underrepresented} we observe that underrepresentation of \texttt{automobiles} causes its CL representations to collapse with \texttt{trucks}. To delve deeper into this phenomenon, in this section, we consider two cases of underrepresentation: (i) a controlled study with \cifar\ dataset, where we emulate the effect of underrepresentation by systemically subsampling a class, and (ii) on \bios\ dataset, which is naturally imbalanced. 

\subsection{Controlled study of class underrepresentation}
\label{sec:cifar10}

\begin{wrapfigure}[20]{r}{0.5\linewidth}
\vspace{-0.1in}
    \centering
    \includegraphics[width=\linewidth]{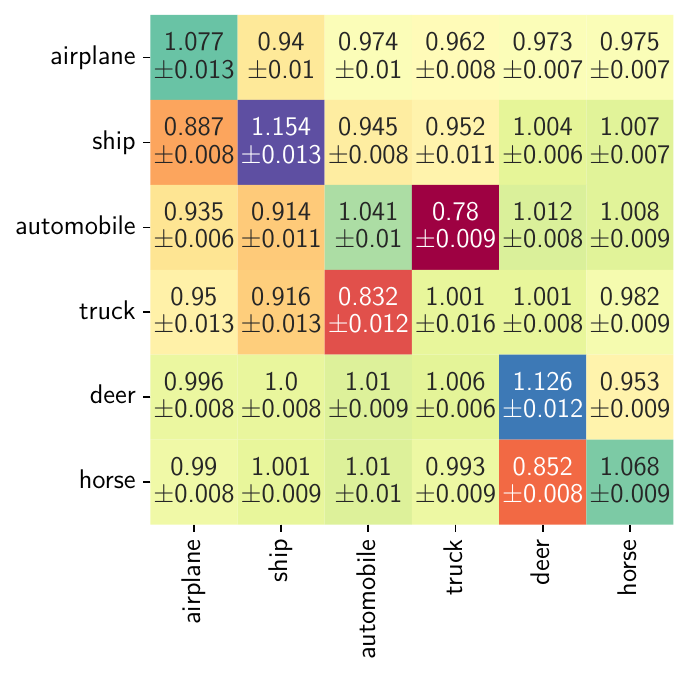}
    \vspace{-0.3in}
    \caption{Representation harms in \cifar\ over 10 repetitions.}
    \label{fig:simclr-RH-100-fold}
    \vspace{-0.3in}
\end{wrapfigure}

\subsubsection{Experimental setup}

\paragraph{Dataset:}
For our controlled study, we consider \cifar\ \citep{krizhevsky2009learning}, a well-known visual benchmark dataset that has 10 classes (\texttt{airplane}, \texttt{cars}, etc.) and both the training and the test datasets are balanced (equal number of images in all classes). To simulate underrepresentation, we randomly subsample 1\% of the images for one of the classes when training our CL models. We denote this dataset as $\cD_k$, where $k$ is the class that is undersampled.  Furthermore, we train a CL model on the balanced dataset ($\cD_{\text{bal}}$) as a baseline.  

\paragraph{Models and representations:} We use the ResNet-34 backbone and train on both balanced and underrepresented training datasets using two well-known CL approaches: SimCLR \citep{chen2020simpleCL} and SimSiam \citep{chen2021exploring}. We will refer to the model trained with a balanced (resp. underrepresented) dataset as a balanced (resp. underrepresented) model. 
In the main text, we present results for SimCLR. The results for SimSiam are deferred to Appendix \ref{sec:simsiam-cifar-supp} and are quite similar to the SimCLR results.  
Further details and results of the SimCLR implementation can be found in Appendix \ref{sec:simclr-cifar-supp}.

\subsubsection{Representation harm}

\label{sec:rep-harm}
In Figure \ref{fig:auto-underrepresented} we see that the underrepresentation in \texttt{automobile} class causes its CL representations to collapse with \texttt{trucks}, which is a semantically similar class. Here, we further investigate this behavior, \ie, whether this phenomenon persists among other semantically similar pairs. 

\paragraph{Metric:}
We measure the representation harm between a pair ($l$, $m$) when the $k$-th class is underrepresented as
\begin{equation} \label{eq:rep-harm-metric}
\begin{aligned}\textstyle
     \text{RH}(l, m; k) = \frac{ {\bD(l, m; f_k)}}{\bD(l, m; f_\text{bal})},  ~
      \bD(l, m; f_k) = \frac{1}{n_l n_m} \sum_{\substack{i \in[n_l],\\j \in [n_m]}}  \big\{1 - \cos\big(f_k(x_{l, i}), f_k(x_{m, j})\big)\big\}\,, 
\end{aligned}
\end{equation} where $f_{\text{bal}}$ (resp. $f_{k}$) is the balanced (resp. underrepresented) CL model  trained on $\cD_\text{bal}$ (resp. $\cD_k$). $\text{RH}(l, m; l) < 1$ implies that underrepresentation in class $l$ causes its CL representation to collapse with class $m$. 
In Figure \ref{fig:simclr-RH-100-fold} we plot the representation harm metrics, where the $(l, m)$-th entry is $\text{RH}(l, m; l)$.  We present RH metrics for the four vehical classes (\texttt{airplane}, \texttt{ship}, \texttt{automoble} and \texttt{truck}), and two animal classes (\texttt{deer} and \texttt{horse}).
The other RH metrics can be found in Appendix \ref{sec:simclr-RH-AH-supp}.

\paragraph{Results:}  In Figure \ref{fig:simclr-RH-100-fold} we observe the worst case of representation harm between \texttt{automobile} and \texttt{truck} classes. The RH metric for this pair is the lowest ($0.78 \pm 0.009$) when \texttt{automobile} is undersampled, and second lowest ($0.832 \pm 0.012$) when \texttt{truck} is undersampled. This means underrepresentation in either \texttt{automobile} or \texttt{truck} results in a collapse in their CL representations with each other. Similar behavior is also observed between pairs (\texttt{airplane}, \texttt{ships}) (RH metrics are $0.887 \pm 0.008$ and $0.94  \pm 0.01$) and (\texttt{deers}, \texttt{horses}) (RH metrics are $0.852 \pm 0.008$ and $0.953 \pm 0.009$). Since (\texttt{automobiles}, \texttt{trucks}) are vehicles on road , (\texttt{ships}, \texttt{airplanes}) are vehicles with blue (water/sky) backgrounds, and (\texttt{deers}, \texttt{horses}) are mammals with green backgrounds, these pairs are semantically similar pairs, which affirms our intuition that CL representations of an underrepresented class collapse with the representations of a similar class. We shall see later in Section \ref{sec:allocation-harm-cifar10} that these representation harms cause allocation harm, \ie\ reduction in downstream classification accuracies.

\subsection{Effects of under-representation in the wild} \label{sec:bios}
An important premise of self-supervised learning is the ability to train on large amounts of data from the Internet with little or no data curation. Such data will inevitably contain many underrepresented groups. In this experiment, we study the potential harms of CL applied to data obtained from Common Crawl, a popular source of text data for self-supervised learning.

\subsubsection{Experimental setup}

\begin{wrapfigure}[21]{r}{0.6\linewidth}
\vspace{-0.3in}
    \centering \includegraphics[width=\linewidth]{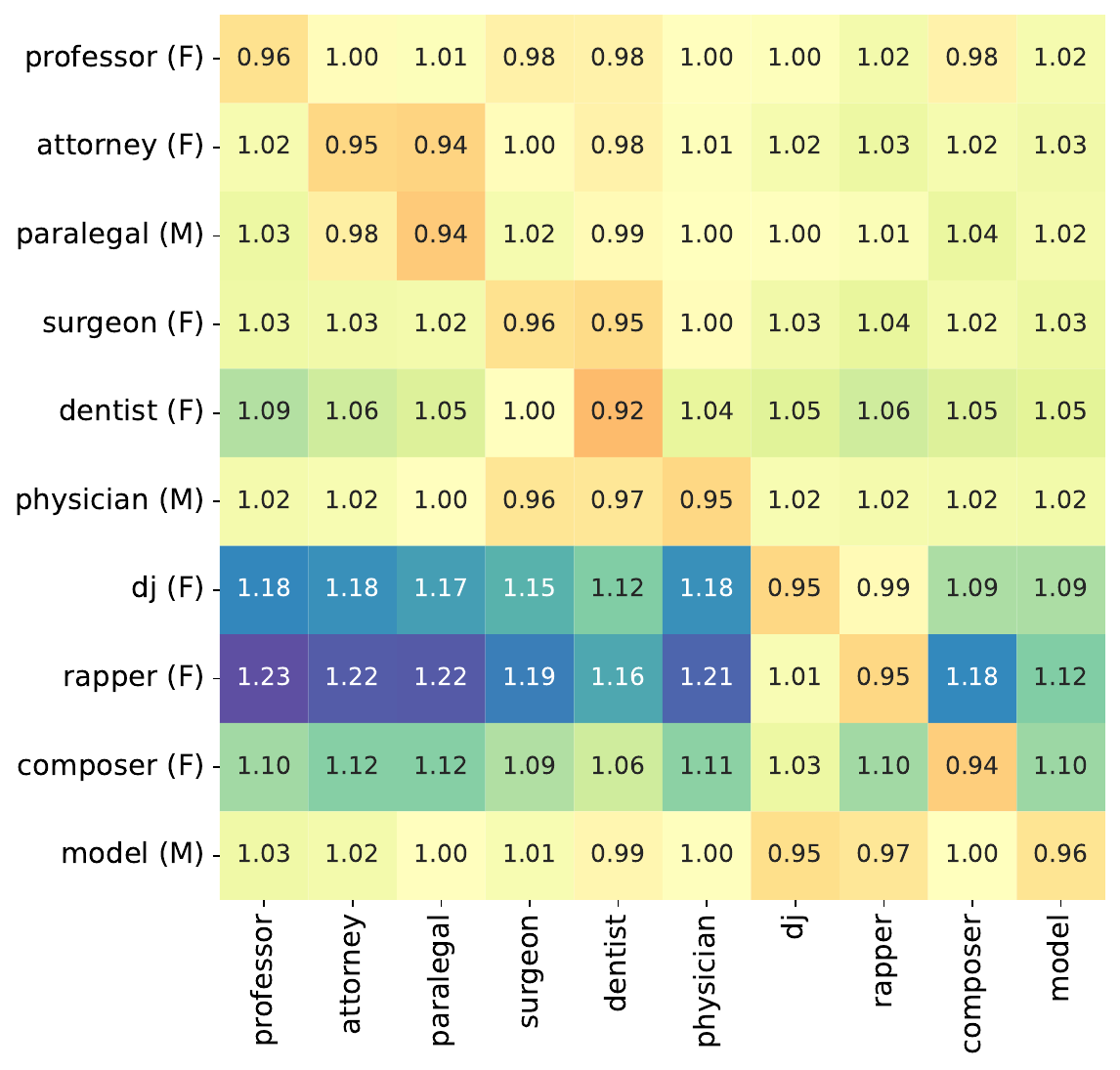}
    \vspace{-0.2in}
    \caption{Gender RH in \bios\ dataset.}
    \label{fig:representation-harm-bios}
\end{wrapfigure}

\paragraph{Dataset:} We consider \bios\ dataset \citep{de-arteaga2019Bias} which consists of around 400k online biographies in English extracted from the Common Crawl data. These biographies represent 28 occupations appearing at different frequencies (\texttt{professor} being the most common, and \texttt{rapper} the least common). In addition, many occupations are dominated by biographies of either male or female gender (identified using the pronouns in the biographies), mimicking societal gender stereotypes. Please see Appendix \ref{sec:bios_supp} for additional details.

\paragraph{Model and representations:} We obtain CL representations by fine-tuning BERT \citep{devlin2018BERT} with SimCSE loss \citep{gao2021simcse}, an analog of SimCLR for text data, that uses dropout in the representation space instead of image augmentations. We use a random 75\% of samples for training the SimCSE representations and the remaining data for computing all metrics reported in the experiments. SimCSE representations are trained using the source code from the authors in the unsupervised mode and with the MLP projection layer \citep{gao2021simcse}.

\subsubsection{Representation harm}

As discussed previously, we expect the representations of underrepresented groups to collapse towards the groups that are most similar to them. For example, consider the class \texttt{surgeon} which consists of 85\% male and 15\% female biographies. Will the biographies of underrepresented female \texttt{surgeons} be assigned representations similar to male \texttt{surgeons} or to female biographies of a different but related occupation such as \texttt{dentist}? Although none of these outcomes are desirable, the latter can cause representation harms associated with gender stereotyping.

\paragraph{Metric:} To measure the representation harm and answer the aforementioned question we consider a variant of the metric based on the average cosine distance used in the \cifar\ experiment. Let $\bD(l_g, m_{g'})$ be the average cosine distance between the representations of gender $g$ biographies from occupation $l$ and gender $g'$ biographies from occupation $m$ (analogous to \eqref{eq:rep-harm-metric} where we omit the dependency on the model $f$ since we have a single CL model in this experiment). Let $g$ be the underrepresented gender for occupation $l$, we define gender representation harm (GRH) for occupations $l$ and $m$ as:
\begin{equation}\textstyle
\label{eq:rep-harm-bios}
\text{GRH}(l, m) = \frac{\bD(l_g, m_g)}{\bD(l_g, l_{g'})}.
\end{equation}
$\text{GRH}(l, m) < 1$ implies that the learned representations for the underrepresented gender in occupation $l$ are \emph{closer} to representations of occupations $m$ for the same gender than they are to the representations for the different gender within the same occupation. $\text{GRH}(l, m) < 1$ for $l \neq m$ can be interpreted as a warning sign of gender stereotyping in representations.

\paragraph{Results:} We present the GRH results for a subset of occupations in Figure \ref{fig:representation-harm-bios} (F and M in the row names indicate the underrepresented gender for the corresponding occupation; complete results are in Appendix \ref{subsec:grh_bios_all}). We focus on the off-diagonal entries which should be >1 when there is no representation harm. We observe several deviations from this rule, especially for occupation pairs (\texttt{attorney}, \texttt{paralegal}) and (\texttt{surgeon}, \texttt{dentist}). Specifically, GRH$(\texttt{attorney}, \texttt{paralegal})$ of 0.94 suggests that representations of female attorneys are closer to representations of female paralegals than they are to that of male attorneys. The two occupations are highly related and such representation harm can result in disadvantaging female attorneys. Analogously, GRH$(\texttt{surgeon}, \texttt{dentist})$ of 0.95 corresponds to a similar problem for a pair of occupations in the medical domain.

\section{Asymptotic analysis of CL representations}
\label{sec:theory}

In this section, we show theoretically that the representation harms in contrastively learned representations are generic and are not specific to certain datasets. Here, we focus on a generic contrastive learning loss \citep{chen2020simpleCL,wang2020understanding}:
\begin{equation}
    \begin{aligned}
        \min\nolimits_{\Phi:\cX\to\bS^{d-1}}\bL_\text{CL}(\Phi(X)), \\\textstyle
    \bL_{\text{CL}}(V) \triangleq - \frac1n \sum_{i\in [n]}  \frac{1}{\sum_{j\in[n]}e(i,j)} \sum_{j\in [n]} e(i, j) \log \Big\{\frac{\exp(\nicefrac{v_i^\top v_j}{\tau})}{\frac1n \sum_{l \in [n]}\exp(\nicefrac{v_i^\top v_l}{\tau})}\Big\},\label{eq:node2vec}
    \end{aligned}
\end{equation}
where $\bS^{d-1}$ is the unit sphere in $\reals^d$, the rows of $\Phi(X)\in\reals^{n\times d}$ are the learned representations of the samples $X_i$, and $v_i \in \reals^d$ is the (learned) representation of the $i$-th sample. 
Note that the loss depends on the raw inputs $X_i$'s only through their similarities $e(i,j)$. We shall exploit this fact to simplify the subsequent theoretical developments.
In practice, $\Phi$ is usually parameterized as a neural network. 

To isolate the effects of the loss (from other inductive biases encoded in the architecture of $\Phi$ and the training algorithm), we make the layer-peeled assumption \citep{fang2021LayerPeeled,fang2021exploring} that $\Phi$ can produce any point in $\bS^{d-1}$. This assumption simplifies the problem to optimizing directly on the outputs of $\Phi$ (instead of the parameters of $\Phi$).
\begin{equation}
\min\nolimits_{v_i\in\bS^{d-1}}\bL_\text{CL}(V),
\label{eq:node2vec-layer-peeled}
\end{equation}
where the optimizers $v_i^\star$ corresponds to the learned representation of $X_i$. Since the CL loss depends only on the similarities between samples, we directly impose a probabilistic model on the similarities (instead of imposing assumptions on the distribution of samples). This simplifies the theoretical developments. 

\paragraph{Stochastic block model:}
We formalize the similarities between samples as a similarity graph on the samples and impose a stochastic block model (SBM) \citep{holland1983Stochastic} on the similarity graph.  Typically, an SBM with $K$ blocks is described by two parameters: (1) $\Pi = [\pi_1, \dots, \pi_K]^\top$ the probabilities of a data point or node belonging to each of the blocks, and (2) $A = [[\alpha_{k, k'}]]_{k, k'\in[K]}$, a matrix that describes connectivity between blocks, where $\alpha_{k, k'}$ is the probability that a node from block $k$ is connected to a node from block $k'$. The $\pi_k$' s controls the underrepresentation of a block and $\alpha_{k, k'}$'s determines the similarity between blocks.  A sample of size $n$ is drawn from $\text{SBM}(\Pi, A)$ in the following manner: 
\begin{enumerate}
    \item We generate $n$ nodes from a multinomial distribution with probability vector $\Pi$. Let us denote $\{Y_i\}_{i = 1}^n \subset [K]$ as the block annotations, \ie\ the $i$-th node belongs to the $Y_i$-th block. Note that $Y_i$ are i.i.d. $\text{categorical}(\Pi)$. 
    \item For each pair of nodes, we generate $e(i, i') \mid Y_i, Y_{i'} \sim \text{Bernoulli}(\alpha_{Y_i, Y_{i'}})$, where $e(i, i')$ indicates whether the nodes $i$ and $i'$ are connected by an undirected edge. In the context of CL from images, such edges imply that neighboring nodes $i$ and ${i'}$ are augmentations of the same or similar images.
\end{enumerate}
The SBM has two desirable properties:
\begin{enumerate}
    \item An inherent \emph{group structure} that is often associated with downstream classification tasks. For example, the classes in \cifar\ dataset can be considered as groups. 
    \item A notion of \emph{connectivity} among groups that explains which of them are similar. As we shall see later, this is a key factor in identifying the majority groups to which representations of an underrepresented group can collapse. Recall, in the example of \cifar\ (Section \ref{sec:allocation-harm}) we see that the \texttt{automobile} and \texttt{truck} classes are connected to each other, and underrepresentation of either of them results in the collapse of their CL representations. 
\end{enumerate}

\subsection{Simulations with the stochastic block model}
\label{sec:SMB-simulation}

To understand how underrepresentation may cause representation harm in CL representations, we conduct a simple experiment on SBM data. We consider an SBM with three blocks. The connectivity probabilities are provided in the left plot of Figure \ref{fig:SBM-mean-cos}, where only the first two groups are connected with each other. Additional details are provided in Appendix \ref{sec:supp_sbm}. 

\begin{figure}
    \centering
    \includegraphics[width=\linewidth]{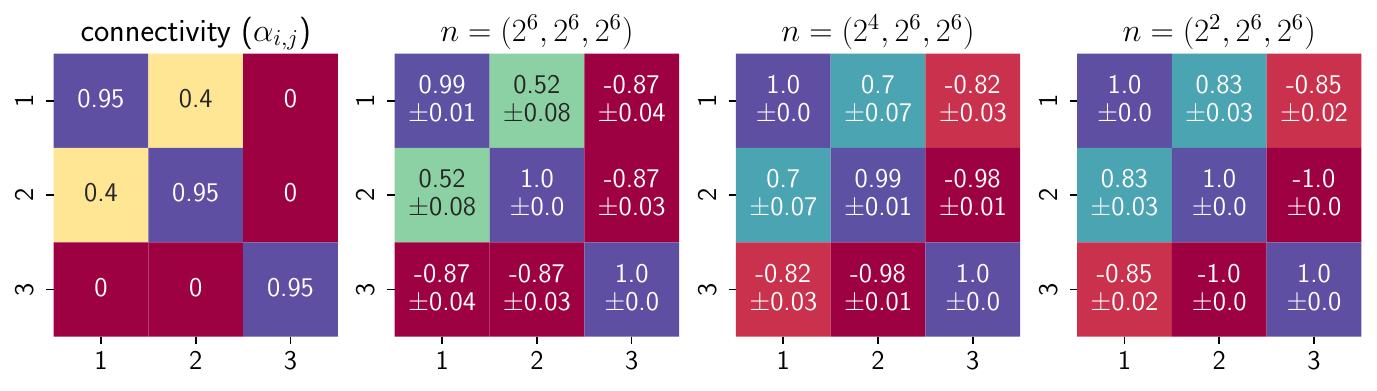}
    \vspace{-0.1in}
    \caption{\emph{Left:} connectivity across groups, \emph{middle-left, middle-right and right:} average cosines and their standard deviations across groups.}
    \label{fig:SBM-mean-cos}
\end{figure}

In our experiment, we underrepresent the first block by factors of $2^{-2}$ and $2^{-4}$ (the sample size for the first group is reduced from $2^6$ to $2^4$ and $2^2$) and investigate its representation harm. For this purpose, we obtained the CL representations from \eqref{eq:node2vec-layer-peeled}. Note that the generic CL cost function is exactly the popular node2vec cost function \citep{grover2016node2vec} for graph representation learning, so our results also have implications in graph representation learning. In Figure \ref{fig:SBM-mean-cos} we present the means and standard deviations for the cosines of the CL representations between groups.
Some notable observations are described below.
\begin{itemize}
    \item The diagonals in Figure \ref{fig:SBM-mean-cos} demonstrate that all $v_i^\star$ vectors within a block collapse into a single vector, as they have cosines very close to one. This phenomenon is known as \emph{neural collapse} \citep{papyan2020prevalence} and has been observed by \citet{fang2021exploring} in analyzing CL representation with balanced data. Further related discussions are deferred to Section \ref{sec:node2vec-thm}.
    \item The same plots suggest that the CL representations of the first two groups get closer when the first group is underrepresented, as seen in their increased cosine similarity (it increases from $0.52 (\pm  0.08)$ to $0.7 (\pm 0.07)$ in the middle right and $0.83 (\pm 0.03)$ in the right plot).  This is a form of representation harm, as the representations of the first two groups, which are similar to each other, partially collapse in terms of their CL representations (RH metric in \eqref{eq:rep-harm-metric} are $\text{RH}(1, 2; 1) = 0.48$  and $0.35$). Additionally, the second and third groups become further apart as their cosines decrease from $-0.87 (\pm 0.03)$ to $-0.99 (\pm 0.0)$ and $-1 (\pm 0.0)$.
\end{itemize}

\subsection{Asymptotic analysis of CL representations for stochastic block models} \label{sec:node2vec-thm}
Here, we verify the two phenomena that we observe in our simulation with the SBM dataset: (1) the collapse of CL representations within a block and (2) the representation harm between blocks with high connectivity. For this purpose, we perform an asymptotic analysis on the optimal CL representations $v_i^\star$ at $n$ going to infinity. We require the following assumption in our analysis. 
\begin{assumption} \label{assmp:rep-convergence}
Denoting the representation variables of the $k$-th block as $\{v_{k, j}\}_{j = 1}^{n_k}$, where $n_k = \#\{Y_i = k\}$ and for each $k$ we assume that $\nicefrac{1}{n_k} \sum_{j = 1}^{n_k} v_{k, j}$ converges as $n_k \to \infty$.  
\end{assumption}

The above assumption simply requires that the representation variables in the optimization of \eqref{eq:node2vec-layer-peeled} that belong to the same block do not fluctuate too much, which is a rather minimal assumption. With this assumption, we state our main result. 
\begin{theorem} \label{th:layer-peeled-node2vec}
Assume that $\sum_{k' \in [K]} \pi_{k'} \alpha_{k, k'} > 0$ for each $k$. 
    Then at $n\to \infty$ the optimum CL representations obtained from the minimization of problem in \eqref{eq:node2vec-layer-peeled} satisfy the following:  $v_i^\star \stackrel{a.s.}{=} h_{Y_i}^\star$,
    where $\stackrel{a.s.}{=}$ denotes almost sure equality and $\{h_k^\star\}_{k \in [K]}$ is a minimizer for \begin{equation} \label{eq:node2vec-neural-collapse}
    \textstyle  \underset{h_k\in \bS^{d-1}}{\min}  - 
    \sum_{k_1 = 1}^K \pi_{k_1}
    \frac{ 
    \sum_{k_2 = 1}^K \pi_{k_2}  \alpha_{k_1, k_2} (\nicefrac{ h_{k_1}^\top h_{k_2}}{\tau})
      }{
      \sum_{k_2 = 1}^K \pi_{k_2} \alpha_{k_1, k_2}
      } 
      +
      \sum_{k_1 = 1}^K \pi_{k_1}
     \log \Big \{
    \sum_{k_3 = 1}^K  \pi_{k_3}
    e^{
    \nicefrac{h_{k_1}^\top h_{k_3}}{\tau}
    }
    \Big\}\,.
    \end{equation}
Note that the objective in \eqref{eq:node2vec-neural-collapse} is a weighted version of the generic CL objective in \eqref{eq:node2vec-layer-peeled} applied to common group-wise representations $h_{Y_i}$. 
\end{theorem}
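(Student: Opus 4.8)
The plan is to rewrite $\bL_{\mathrm{CL}}$, sandwich the optimal value of \eqref{eq:node2vec-layer-peeled} between a Jensen-type lower bound and a block-collapsed test configuration, and then argue that equality of the two bounds forces the representations to collapse within each block onto a solution of \eqref{eq:node2vec-neural-collapse}. Expanding $\log\{\exp(v_i^\top v_j/\tau)/(\tfrac1n\sum_l\exp(v_i^\top v_l/\tau))\}=v_i^\top v_j/\tau-\log(\tfrac1n\sum_l e^{v_i^\top v_l/\tau})$ and using $\sum_j e(i,j)/\sum_{j'}e(i,j')=1$, write $\bL_{\mathrm{CL}}(V)=L_1(V)+L_2(V)$ with the attraction term $L_1(V)=-\tfrac1{n\tau}\sum_i\tfrac1{\deg(i)}\sum_j e(i,j)\,v_i^\top v_j$ (here $\deg(i)=\sum_j e(i,j)$) and the uniformity term $L_2(V)=\tfrac1n\sum_i\log(\tfrac1n\sum_l e^{v_i^\top v_l/\tau})$.

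\textbf{A lower bound, uniform in $V$.} For $V\in(\bS^{d-1})^n$ write $\bar\mu_k(V)=\tfrac1{n_k}\sum_{Y_i=k}v_i\in\sB^d$. Convexity of $t\mapsto e^t$ gives $\tfrac1{n_{k_3}}\sum_{Y_l=k_3}e^{v_i^\top v_l/\tau}\ge e^{v_i^\top\bar\mu_{k_3}(V)/\tau}$; since $\log$ is increasing and $v\mapsto\log\sum_{k_3}\tfrac{n_{k_3}}{n}e^{v^\top\bar\mu_{k_3}(V)/\tau}$ is convex, two more applications of Jensen (over $l$, then over $i$, grouped by block) give $L_2(V)\ge\sum_{k_1}\tfrac{n_{k_1}}{n}\log\sum_{k_3}\tfrac{n_{k_3}}{n}e^{\bar\mu_{k_1}(V)^\top\bar\mu_{k_3}(V)/\tau}+\Delta(V)$ with $\Delta(V)\ge0$ the accumulated Jensen gap. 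For $L_1$, replacing $1/\deg(i)$ by $1/(n\sum_{k'}\tfrac{n_{k'}}{n}\alpha_{Y_i,k'})$ costs $O(\sqrt{\log n/n})$ uniformly in $V$ by Bernstein's inequality for the conditionally independent edge indicators (the hypothesis $\sum_{k'}\pi_{k'}\alpha_{k,k'}>0$ makes all degrees $\Theta(n)$), while $\sum_{Y_i=k_1,Y_j=k_2}(e(i,j)-\alpha_{k_1,k_2})v_i^\top v_j=\Tr(V_{k_1}^\top(E_{k_1k_2}-\alpha_{k_1k_2}\vone\vone^\top)V_{k_2})$ is at most $\|E_{k_1k_2}-\alpha_{k_1k_2}\vone\vone^\top\|_{\mathrm{op}}\sqrt{n_{k_1}n_{k_2}}=O(n^{3/2})$ w.h.p.\ by operator-norm bounds for centered Bernoulli matrices---crucially \emph{uniformly} over $V$, since $\|V_k\|_F=\sqrt{n_k}$ regardless of $V$. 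With $n_k/n\to\pi_k$ (SLLN), this yields, on an a.s.-eventual event, $\bL_{\mathrm{CL}}(V)\ge\mathcal G(\bar\mu_1(V),\dots,\bar\mu_K(V))+\Delta(V)-\epsilon_n$ with $\sup_V|\epsilon_n|\to0$, where $\mathcal G\colon(\R^d)^K\to\R$ is precisely the objective of \eqref{eq:node2vec-neural-collapse} but evaluated at arbitrary (not necessarily unit) vectors.

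\textbf{Upper bound, pinching, and conclusion.} Taking $v_i:=h_{Y_i}^\star$ for a minimizer $h^\star$ of \eqref{eq:node2vec-neural-collapse} over $(\bS^{d-1})^K$ gives a block-collapsed configuration, so $\Delta=0$ there and the same estimates give $\bL_{\mathrm{CL}}((h^\star_{Y_i})_i)\to\mathcal G(h^\star)=\min_{(\bS^{d-1})^K}\mathcal G$; hence $\bL_{\mathrm{CL}}(V^\star)\le\mathcal G(h^\star)+o(1)$. Combining with the lower bound at $V^\star$ and Assumption~\ref{assmp:rep-convergence} (whereby $\bar\mu_k(V^\star)\to\bar h_k$ for some $\bar h\in(\sB^d)^K$, so $\mathcal G(\bar\mu(V^\star))\to\mathcal G(\bar h)$ by continuity) gives $\mathcal G(\bar h)+\limsup_n\Delta(V^\star)\le\mathcal G(h^\star)$. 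Two facts then finish. \emph{(i)} The minimum of $\mathcal G$ over $(\sB^d)^K$ equals its minimum over $(\bS^{d-1})^K$, so $\mathcal G(\bar h)\ge\mathcal G(h^\star)$; I would establish this via the Gibbs identity $\log\sum_k\pi_k e^{x_k}=\max_q\{\sum_k q_k x_k-\KL(q\|\Pi)\}$, which (applied with $q$ the connectivity-weighted distribution $k_3\mapsto\pi_{k_3}\alpha_{k_1,k_3}/\sum_{k'}\pi_{k'}\alpha_{k_1,k'}$) lower-bounds $\mathcal G$ by a constant attained only at Gram matrices realizable by unit vectors, together with a radial-rescaling check that any residual slack is also minimized on the sphere. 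Thus $\mathcal G(\bar h)=\mathcal G(h^\star)$ and $\Delta(V^\star)\to0$. \emph{(ii)} $\Delta(V^\star)\to0$ forces within-block collapse: the Jensen gap dominates, up to constants, $\tfrac{1}{n_k^2}\sum_{i,l\in k}(v_i^{\star\top}(v_l^\star-\bar\mu_k^\star))^2=\Tr(\Sigma_k^2)+\bar\mu_k^{\star\top}\Sigma_k\bar\mu_k^\star$, where $\Sigma_k=\tfrac1{n_k}\sum_{Y_l=k}(v_l^\star-\bar\mu_k^\star)(v_l^\star-\bar\mu_k^\star)^\top\succeq0$ (writing $\bar\mu_k^\star=\bar\mu_k(V^\star)$) has $\Tr\Sigma_k=1-\|\bar\mu_k^\star\|^2$; since $\Sigma_k$ is $d\times d$, $\Tr(\Sigma_k^2)\ge(\Tr\Sigma_k)^2/d$, so $\Tr\Sigma_k\to0$, i.e.\ $\|\bar\mu_k(V^\star)\|\to1$. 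Hence $\|\bar h_k\|=1$, so $\bar h\in(\bS^{d-1})^K$ is a minimizer of \eqref{eq:node2vec-neural-collapse}; taking $h^\star:=\bar h$, $\tfrac1n\sum_i\|v_i^\star-h^\star_{Y_i}\|^2=\sum_k\tfrac{n_k}{n}(1-\|\bar\mu_k(V^\star)\|^2)+\sum_k\tfrac{n_k}{n}\|\bar\mu_k(V^\star)-\bar h_k\|^2\to0$ a.s., which is $v_i^\star\stackrel{a.s.}{=}h^\star_{Y_i}$.

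\textbf{Main obstacle.} The conceptual crux is part \emph{(i)} of the pinching step---showing that $\mathcal G$ cannot be lowered by shrinking block means into the interior of the ball, which is exactly what licenses the conclusion $\Delta(V^\star)\to0$ (and hence, via \emph{(ii)}, neural collapse); this is where the particular weighted form of the CL objective is really used. The most load-bearing \emph{technical} ingredient is the operator-norm bound on the centered block-adjacency matrices in the lower bound, since it is what makes that bound hold uniformly over all $V$ and therefore usable at the graph-dependent optimizer $V^\star$ (without it one cannot rule out that $V^\star$ ``exploits'' the realized graph).
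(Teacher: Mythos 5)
Your proposal follows the same overall architecture as the paper's proof: split $\bL_{\text{CL}}$ into an attraction term and a log-sum-exp term, replace the empirical graph quantities by their population analogues uniformly over $V$, lower-bound the log-sum-exp term via Jensen so that the loss is controlled by a function $\mathcal G$ of the block means, and close the sandwich with the block-collapsed test configuration. The differences are in execution, and several of them make your version more robust. For the uniform concentration of the linear term the paper proves a fourth-moment/Borel--Cantelli bound at each fixed configuration and then appeals to continuity and separability of the sphere (Lemma \ref{lemma:tech1-as}), whereas your operator-norm bound on the centered block-adjacency matrices combined with $\|V_k\|_F=\sqrt{n_k}$ gives genuine uniformity over $V$ directly; that is the cleaner route. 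For the collapse itself, the paper applies Jensen with a tilted measure so that the correction telescopes to zero and then characterizes the exact equality case $v_{k,i}=h_k$; this identifies only the exact minimizers of the modified loss and then requires an informal ``continuity of $\argmin$'' step to transfer back to $\bL_{\text{CL}}$, whereas your quantitative control of the Jensen gap via $\Tr(\Sigma_k^2)\ge(\Tr\Sigma_k)^2/d$ handles approximate minimizers directly, which is what the problem actually demands. The one place where your argument remains a sketch is the pinching step \emph{(i)}: you need $\min_{(\sB^d)^K}\mathcal G=\min_{(\bS^{d-1})^K}\mathcal G$, and the Gibbs-identity-plus-radial-rescaling outline is not carried through (the Gibbs lower bound $-\sum_k\pi_k \KL(w_{k\cdot}\|\Pi)$ is in general not attained, so it does not by itself localize the minimum to the sphere). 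You are right that this is the crux; note, however, that the paper's own proof silently assumes the same fact in the second inequality of \eqref{eq:node2vec-modified-lb-final}, where the bound $\widetilde\bL_{\text{CL}}(V)$, evaluated at block means lying in the closed ball, is bounded below by the minimum of the same expression over the sphere. On this point your proposal is no less complete than the paper --- it merely makes the gap visible.
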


This theorem shows that all samples from the same block have the same representation as $n\to\infty$; \ie\ all $v_i^\star$'s converge to their corresponding $h_{Y_i} ^\star$s. This phenomenon is known as \emph{neural collapse}. It has been studied in the context of a layer peeled model for supervised learning \citep{papyan2020prevalence,NEURIPS2021_f92586a2,mixon2020neural,lu2020neural}, and in CL representations with balanced data \citep{fang2021exploring}. \citet{fang2021exploring}'s results are most similar to ours, but do not elicit the effects of dataset imbalance on the geometry of the collapsed representations. 

\begin{wrapfigure}[11]{r}{0.68\textwidth}
    \centering
    \vspace{-0.2in}
    \includegraphics[width=\linewidth]{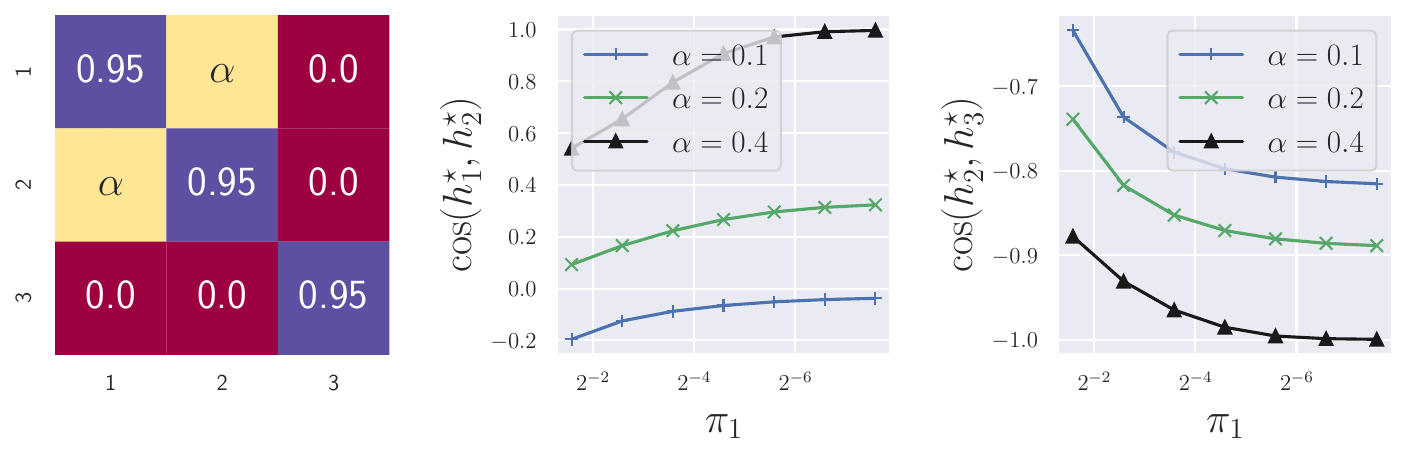}
    \vspace{-0.25in}
    \caption{\emph{Left:} connectivity, and \emph{middle and right:}  cosine similarities between groups (1, 2) and (2, 3).}
    \label{fig:rep-harm-SBM2}
\end{wrapfigure}

\paragraph{Representation harm:}  To understand the representation harm we reconsider the setup of the synthetic experiment in Section \ref{sec:SMB-simulation}. The SBM has three blocks, where for a given $\pi_1$ we set $\pi_2= \pi_3 = (1 - \pi_1)/ 2$. In the leftmost plot of Figure \ref{fig:rep-harm-SBM2} we present the connectivity matrix, and as before, only the first two blocks are connected with probability $\alpha$.  In this setup, we obtain representations $h_k^\star$ from \eqref{eq:node2vec-neural-collapse} and plot their cosine similarities for pairs (1, 2) and (2, 3) in Figure \ref{fig:rep-harm-SBM2}. The harm in representation becomes more prominent between the first two groups with the severity of the underrepresentation of the minority group, as their cosine increases in the middle plot of Figure \ref{fig:rep-harm-SBM2}. In fact, their representations become identical for $\pi_1 < 2^{-6}$ when there is sufficient connectivity between the first two groups ($\alpha = 0.4$). Additionally, the two disconnected majority groups (second and third groups) get further apart, and at the extreme ($\pi_1 > 2^{-6}$) they become exactly opposite (cosine is $-1$).

Finally, when the first two groups are less connected ($\alpha = 0.1$), their representations do not collapse, even for a severe underrepresentation of the first group (cosine is less than zero when $\pi_1 < 2^{-6}$). This relates to an observation in \cifar\ case-study in Figure \ref{fig:RH-100-fold-full} in Appendix \ref{sec:simclr-RH-AH-supp}, that the \texttt{dog} class, which is arguably disconnected from other classes, suffers the least allocation harm when it is underrepresented.

\paragraph{Practical implications:} Reiterating our previous discussions, our analysis suggests that harm in representation occurs between two semantically similar groups when either of them is underrepresented. Therefore, to mitigate the issue, practitioners should consider a combination of the following strategies; 1) reweighing the loss to counteract underrepresentation, an approach considered in \citet{liu2021selfsupervised,zhou2022Contrastive}, and 2) a "surgery" on connectivity, similar to the approach in \citet{ma2021conditional}. We defer our further discussion of the mitigation of harm to Section \ref{sec:discussion}.

\section{Representation harms in CL representations cause allocation harms}
\label{sec:allocation-harm}

\begin{wrapfigure}[7]{r}{0.25\textwidth}
    \centering
    \vspace{-0.2in}
    \includegraphics[width=0.25\textwidth]{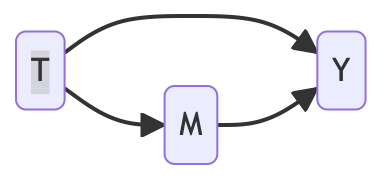}
    \caption{Basic model for causal mediation analysis}
    \label{fig:causal-mediation-analysis-model}
\end{wrapfigure}

In Section \ref{sec:representation-harm} we observed that underrepresentation of a group causes its CL representation to collapse with symantically similar groups. 
Through a causal mediation analysis (CMA) \citep{pearl2022direct}, in this section, we show that this is partly responsible for allocation harm (AH) in downstream classification tasks. Thus, to fully mitigate downstream allocation harm, practitioners must address this representation harm.

\subsection{Background on causal mediation analysis}

The goal of CMA is to decompose the causal effect of a treatment on an outcome into a direct (causal) effect that goes directly from the treatment to the outcome and an indirect effect that goes through other variables in the causal graph. In this section, we consider the basic CMA graph shown in Figure \ref{fig:causal-mediation-analysis-model}. Here, treatment $T$ is whether the dataset is undersampled, $M$ is the CL representations, and $Y$ is a measure of downstream allocation harm (\eg\ misclassification rate). 

The two effects that we evaluate are the \textbf{natural indirect effect (NIE)} and the \textbf{reverse natural direct effect (rNDE)}. It is easily seen that they add up to the total effect $\TE\triangleq \Ex\big[Y_{\doop(T=1)}\big] - \Ex\big[Y_{\doop(T=0)}\big]$:
\begin{equation}
\TE = \underbrace{\Ex\big[Y_{\doop(T=1,M=1)}\big] - \Ex\big[Y_{\doop(T=0,M=1)}\big]}_{-\rNDE} + \underbrace{\Ex\big[Y_{\doop(T=0,M=1)}\big] - \Ex\big[Y_{\doop(T=0,M=0)}\big]}_{\NIE}
\label{eq:TE-decomposition}
\end{equation}

Note that \eqref{eq:TE-decomposition} is not the standard decomposition of TE in CMA; the standard decomposition expresses TE as the sum of the NDE and rNIE. Here, the NIE is the downstream allocation that can be attributed to (representation harms) in the CL representation, while the rNDE is the downstream allocation harms directly caused by underrepresentation.

\subsection{Mediation analysis on controlled study}
\label{sec:allocation-harm-cifar10}
In our controlled study with \cifar\ dataset in Section \ref{sec:cifar10} we observed that semantically similar pairs such as (\texttt{automobiles}, \texttt{trucks}), (\texttt{airplanes}, \texttt{ships}), and (\texttt{deers}, \texttt{horses}) suffer from representation harm when one of the classes is underrepresented.
Following this, using mediation analysis, we show that it may cause allocation harm in a downstream classification, thus emphasizing its importance in mitigating allocation harm.

\paragraph{Setup:} Suppose that treatment $T = 1$ is the underrepresentation of class $k$. We train a linear head with a randomly chosen 75\% of the test data, considering two scenarios: (1) it is trained on a balanced dataset ($T = 0$), and (2) it is trained on an imbalanced dataset where the class $k$ is subsampled to $1\%$ of its original size ($T = 1$).  Recalling that $f_{\text{bal}}$ (resp. $f_{k}$) denotes the CL model trained on balanced (resp. class $k$ underrepresented) training dataset, we denote training a linear head on $f_{\text{bal}}$ as $M = 0$ and the same with $f_k$ as $M = 1$. A CL model coupled with a linear head builds the final image classifier, which we always evaluate on the remaining 25\% of the test dataset. 

\paragraph{Metrics:} We consider three classifiers, where a linear head is trained on top of: (1) $f_{\text{bal}}$ using balanced data ($T = 0, M = 0$), which we denote as $\hat y_{0, 0}$, (2) $f_k$ using balanced data ($T = 0, M = 1$), denoted as $\hat y_{0, 1}$, and finally (3) $f_k$ using imbalanced data $(T = 1, M = 1)$, denoted as $\hat y_{1, 1}$. With all the notations in hand, the total effect (resp. natural indirect effect) due to the underrepresentation of the class $k$ in classifying a sample $x$ with the true class $y = k$ as $\hat y = l$ is
\begin{equation} \label{eq:TE-NIE}
\begin{aligned}
     \text{TE}(k, l) &= P(\hat y^{(k)}_{1, 1}(x) = l \mid y = k) - P(\hat y_{0, 0}(x) = l \mid y = k)\,\\
     \text{NIE}(k, l) &= P(\hat y^{(k)}_{0, 1}(x) = l \mid y = k) - P(\hat y_{0, 0}(x) = l \mid y = k)\,.
\end{aligned}
\end{equation}

Finally, the corresponding reverse natural direct effect is calculated as $-\text{rNDE}(k, l) = \text{TE}(k, l) - \text{NIE} (k, l)$. For $k \neq l$ the
$\text{TE}(k, l) > 0$ indicates how much more often the class $k$ is mistaken for the class $l$ when $k$ is underrepresented and $\text{NIE}(k, l)$ is the part that is caused by representation harm. We present the $\text{TE}$, $-\text{rNDE}$ and $\text{NIE}$ as heatmap plots in Figure \ref{fig:AH-100-fold}, where $\text{TE}(k, l)$ is the entry of the $(k, l)$-th cell and similarly for $-\text{rNDE}$ and $\text{NIE}$. As in Section \ref{sec:rep-harm}, we present the metrics for only six classes (four vehicles and two animals). The remaining ones are provided in Appendix \ref{sec:simclr-RH-AH-supp}. 

\begin{figure}
    \centering
    \includegraphics[width=\linewidth]{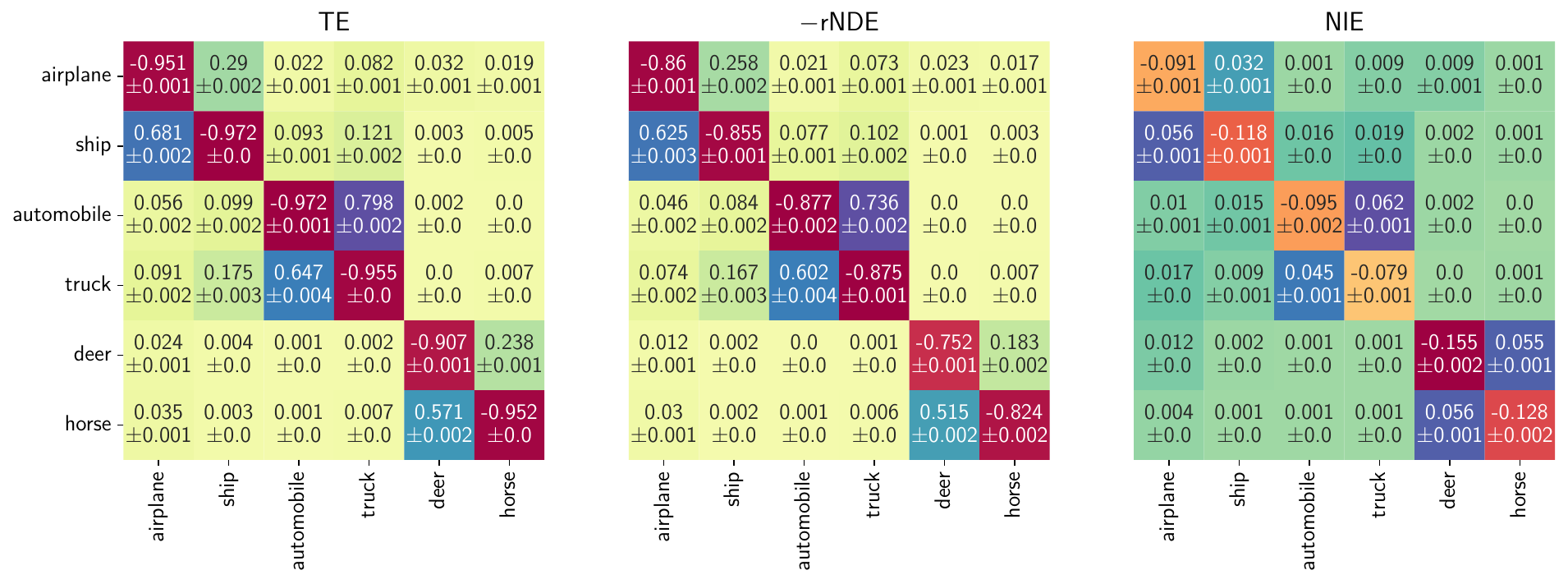}
    \caption{TE (\emph{left}), $-$rNDE (\emph{middle}) and NIE (\emph{right}) for \cifar\ dataset over 10 repetitions.   }
    \label{fig:AH-100-fold}
\end{figure}

\paragraph{Results:}  In Figure \ref{fig:AH-100-fold} we mainly focus on the NIE, as this part of allocation harm caused by harm in CL representation.  When underrepresented, \texttt{automobiles} are mistaken as \texttt{truck} most often (off-diagonal TE is highest at $0.798 \pm 0.002$). Additionally, the allocation harm caused by harm in CL representations is the highest in this case (the highest value observed for the NIE metric is at $0.062 \pm 0.001$), which is related to the highest representation harm observed between them (RH metric is $0.78 \pm 0.009$). Furthermore, the harm in CL representations causes a significant allocation harm for semantically similar pairs (\texttt{automobiles}, \texttt{trucks}), (\texttt{airplanes}, \texttt{ships}), and (\texttt{deers}, \texttt{horses}) as their NIE metrics are significantly higher. This aligns with our observations in Section \ref{sec:cifar10}, as these pairs suffer significant representation harm due to their underrepresentation.

Additionally, representation harm causes a significant reduction in downstream accuracy for these classes. This is observed in the diagonal NIE metrics in Figure \ref{fig:AH-100-fold}, which is the highest for \texttt{deer} ($15.5 \pm 0.2 \%$) and is at least $7.9 \pm 0.1\%$.  This emphasizes that one cannot completely mitigate allocation harm without addressing the harm in CL representations.

For the \bios\ case study performing mediation analysis is challenging due to the existing natural imbalance, which makes it infeasible to create a balanced dataset and thus train a balanced CL model. It is, however, possible to quantify allocation harm, which we investigate in Appendix \ref{sec:AH-bios-supp}.

\section{Summary and discussion}
\label{sec:discussion}

We studied the effects of underrepresentations on contrastive learning algorithms empirically on the \cifar\ and \bios\ datasets and theoretically in a stochastic block model. We find (both theoretically and empirically) that the CL representations of an underrepresented group collapse \emph{ to a semantically similar group}. Although prior work shows that classifiers trained on top of CL representations is more robust to underrepresentation than supervised learning \citep{liu2021selfsupervised} in terms of downstream allocation harms, we show that the CL representations themselves suffer from representation harms: the CL representations of an underrepresented group collapse to a semantically similar group. To reconcile our results with prior work, we decompose the downstream allocation harms in classifiers trained on top of CL representations into a direct effect and an indirect effect mediated by the CL representations via a causal mediation analysis. Our results show that it is necessary to address the representation harms in CL representations in order to eliminate allocation harms in classifiers built on top of CL representations.

\paragraph{Attempts to mitigate representation harms in CL:} Broadly speaking, the issue of underrepresentation is in conflict with one of the key advantages of CL or SSL: their ability to use large uncurated datasets from the Internet. Our results suggest broad adoption of SSL can lead to representations harms to underpriviledged groups. This corroborates recent empirical results on algorithmic biases \citep{wired2022dalle,naik2023social} in prompt-guided generative models powered by CLIP \citep{radford2021learning} (e.g., DALL$\cdot$E 2), which uses a variant of CL loss. To address these issues, we need SSL methods that can account for underrepresentation \emph{without requiring group annotations}. Prior works have proposed several such methods
\citep{liu2021selfsupervised,zhou2022Contrastive,assran2022hidden}, but their evaluation metrics are limited to downstream accuracy and thus their effectiveness in mitigating representation harms remains unexplored.
We investigate the representations learned with one of these methods, 
boosted contrastive learning (BCL) \citep{zhou2022Contrastive}. In a case study on \cifar\ dataset in Figure \ref{fig:bcl-part} (further details are in Appendix \ref{sec:BCL}), we find that BCL may not be sufficiently effective to completely mitigate the representation harm.
Although the RH metrics between the pairs (\texttt{automobiles}, \texttt{trucks}) increase from $0.78 \pm 0.009$ and $0.832 \pm 0.012$ (in Figure \ref{fig:simclr-RH-100-fold}) to $0.829 \pm 0.001$ and $0.897 \pm 0.001$ (in Figure \ref{fig:bcl-RH-part}),
BCL does not completely mitigate the harm in representations. As a result, accuracies for these classes still suffer non-trivial reductions, as observed in the diagonal NIE metrics in Figure \ref{fig:bcl-AH-part} (the reduction in accuracy is $5.3\pm 0.2 \%$ (resp. $3.7 \pm 0.01\%$) for \texttt{automobiles} (resp. \texttt{trucks})). These results show that the elimination of representation harms in CL remains a pressing open problem.

\begin{wrapfigure}[17]{r}{0.68\linewidth}
\vspace{-0.1in}
    \begin{subfigure}[b]{0.33\textwidth}
         \centering
         \includegraphics[width=\textwidth]{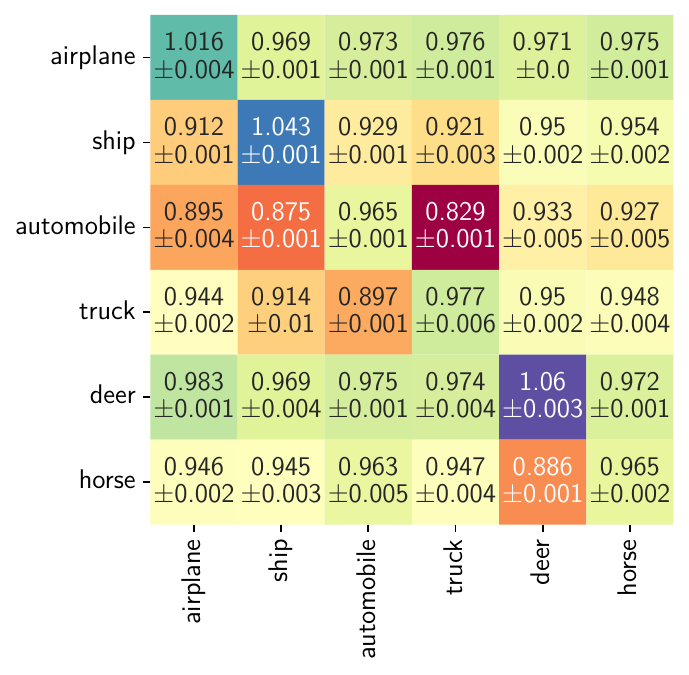}
         \caption{Representation harm}
         \label{fig:bcl-RH-part}
     \end{subfigure}
     \hfill
     \begin{subfigure}[b]{0.33\textwidth}
         \centering
         \includegraphics[width=\textwidth]{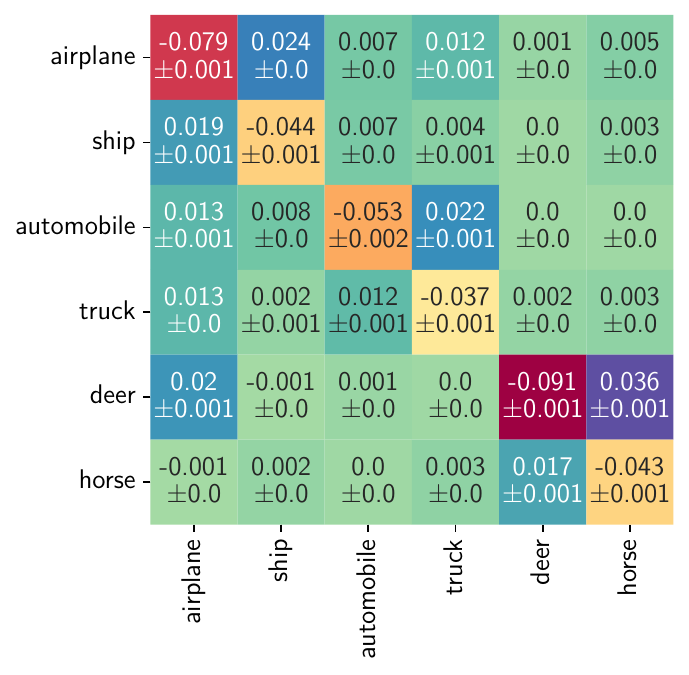}
         \caption{NIE}
         \label{fig:bcl-AH-part}
     \end{subfigure}
        \caption{Representation harm and NIE in allocation harm for BCL representations in \cifar.}
        \label{fig:bcl-part}
\end{wrapfigure}

Our theoretical analysis suggests that practitioners should consider a combination of the following strategies to mitigate harm in representation; (1) reweighing the loss to counteract underrepresentation and (2) a ``surgery'' on connectivity, both of which require group annotations. Since they are not available in most CL applications, it would be interesting to attempt to combine the two using a proxy for group annotations. There have been many efforts to improve performance in minority groups without group annotations in the supervised learning setting \citep{hashimoto2018Fairness,liu2021Just,zeng2022outlier}, but it remains to be seen whether these techniques can be transferred to SSL.

\subsubsection*{Acknowledgments}
This paper is based upon work supported by the National Science Foundation (NSF) under grants no.\ 2027737 and 2113373.

\bibliography{YK,sm}
\bibliographystyle{iclr2024_conference}

\appendix
\section{Supplementary details for under-representation study with \textsc{CIFAR10}}
\label{sec:cifar_supp}

\subsection{Supplementary details for SimCLR}
\label{sec:simclr-cifar-supp}

We use the implementation in \texttt{simclr.py} file of \href{https://github.com/p3i0t/SimCLR-CIFAR10}{https://github.com/p3i0t/SimCLR-CIFAR10} for the training of contrastive learning (CL) models with the SimCLR training protocol. Please see \texttt{simclr.py} in supplementary codes for parameter values. We use the same parameters in both training cases with balanced and imbalanced datasets. 

\subsubsection{representation and allocation harm} 

\label{sec:simclr-RH-AH-supp}
The representation and allocation harms for all 10 classes are provided in Figures \ref{fig:RH-100-fold-full} and \ref{fig:AH-100-fold-full}. Here, each row corresponds to an underrepresentation to  $1\%$ for the corresponding class. Additionally, in Figures \ref{fig:RH-20-fold-full} and \ref{fig:AH-20-fold-full} we present similar plots when classes are underrepresented at $5\%$ of their original sizes.

\begin{figure}[h]
     \centering
     \begin{subfigure}[b]{0.32\textwidth}
         \centering
         \includegraphics[width=\textwidth]{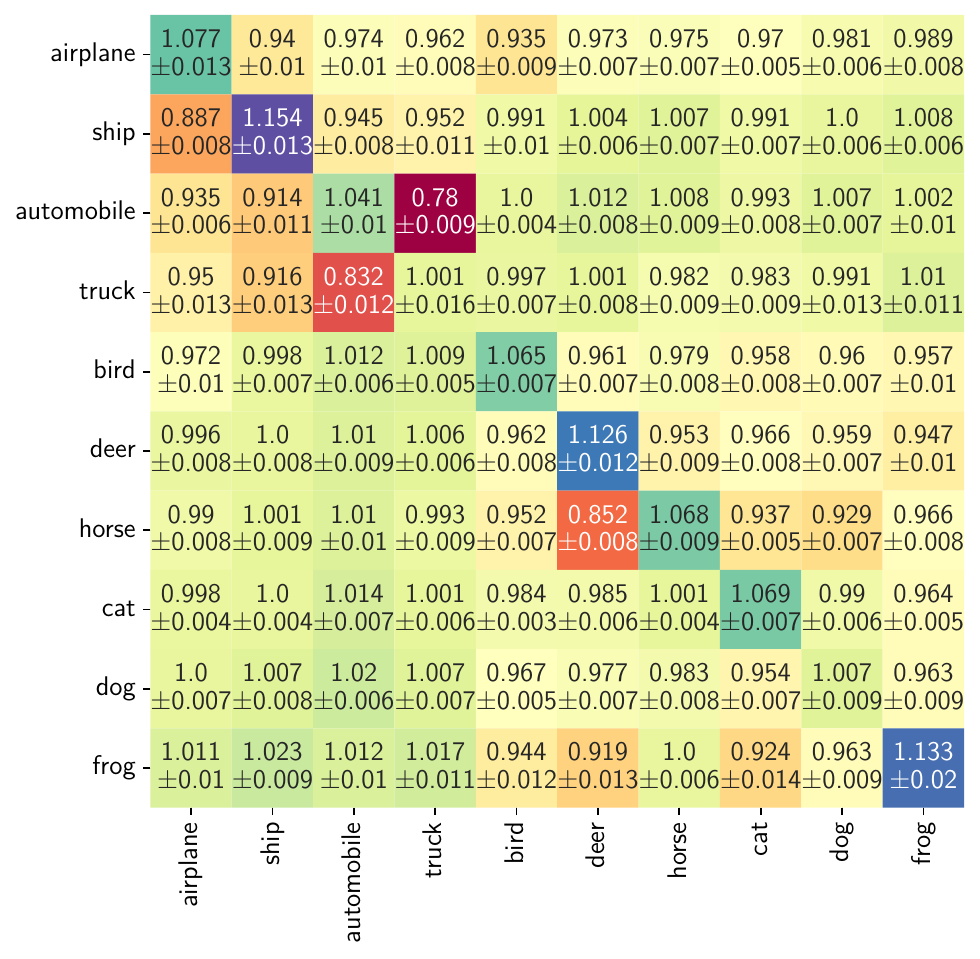}
         \caption{Representation harm}
         \label{fig:RH-100-fold-full}
     \end{subfigure}
     \hfill
     \begin{subfigure}[b]{0.66\textwidth}
         \centering
         \includegraphics[width=\textwidth]{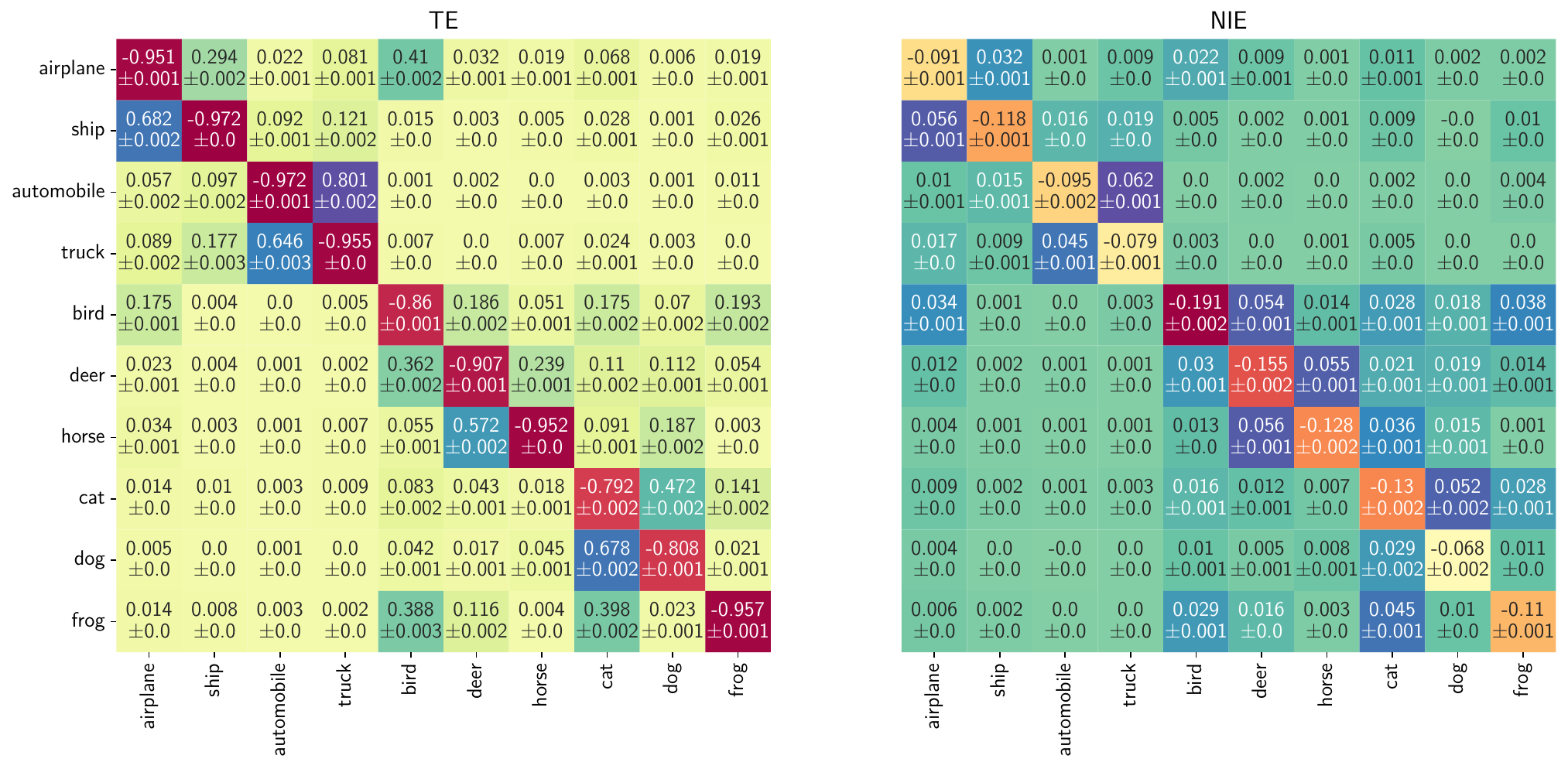}
         \caption{Allocation harm}
         \label{fig:AH-100-fold-full}
     \end{subfigure}
        \caption{Representation harm and total effect and natural indirect effect for allocation harm for SimCLR representations in \cifar\, where the classes are undersampled to $1\%$.}
        \label{fig:RH_AH_simclr_full_100_fold}
\end{figure}

\begin{figure}[h]
     \centering
     \begin{subfigure}[b]{0.32\textwidth}
         \centering
         \includegraphics[width=\textwidth]{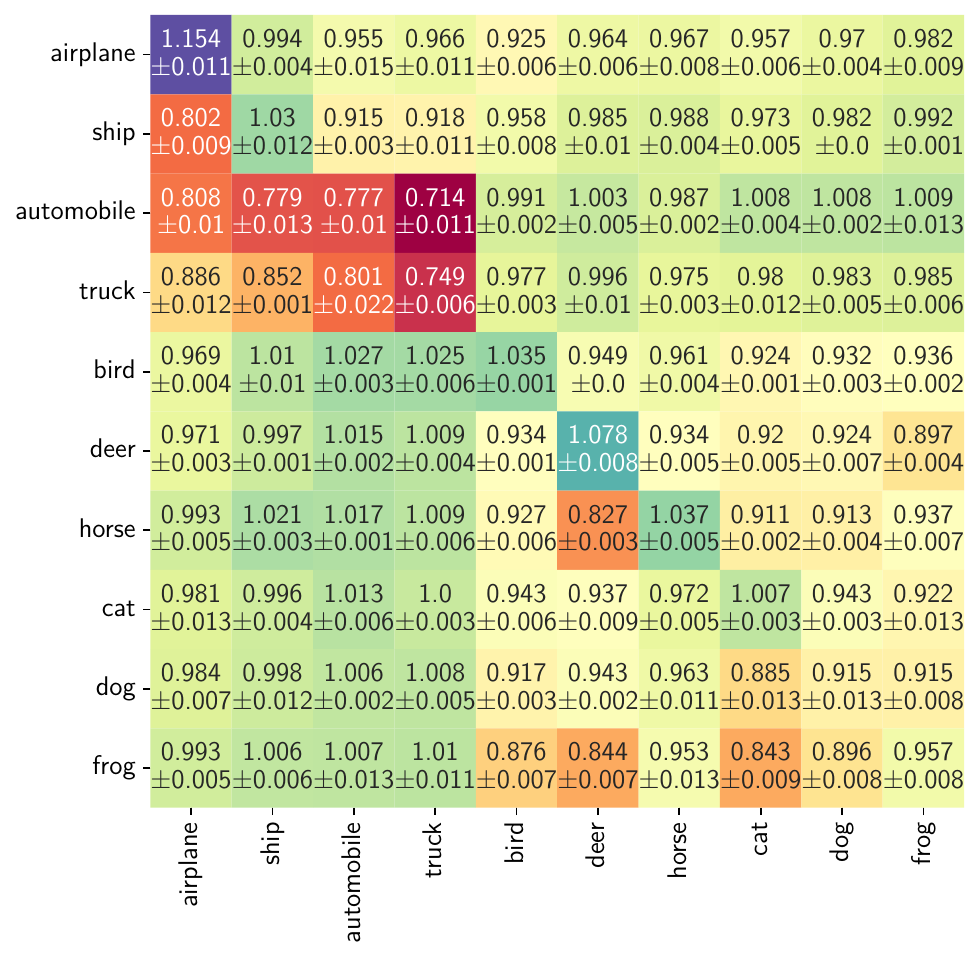}
         \caption{Representation harm}
         \label{fig:RH-20-fold-full}
     \end{subfigure}
     \hfill
     \begin{subfigure}[b]{0.66\textwidth}
         \centering
         \includegraphics[width=\textwidth]{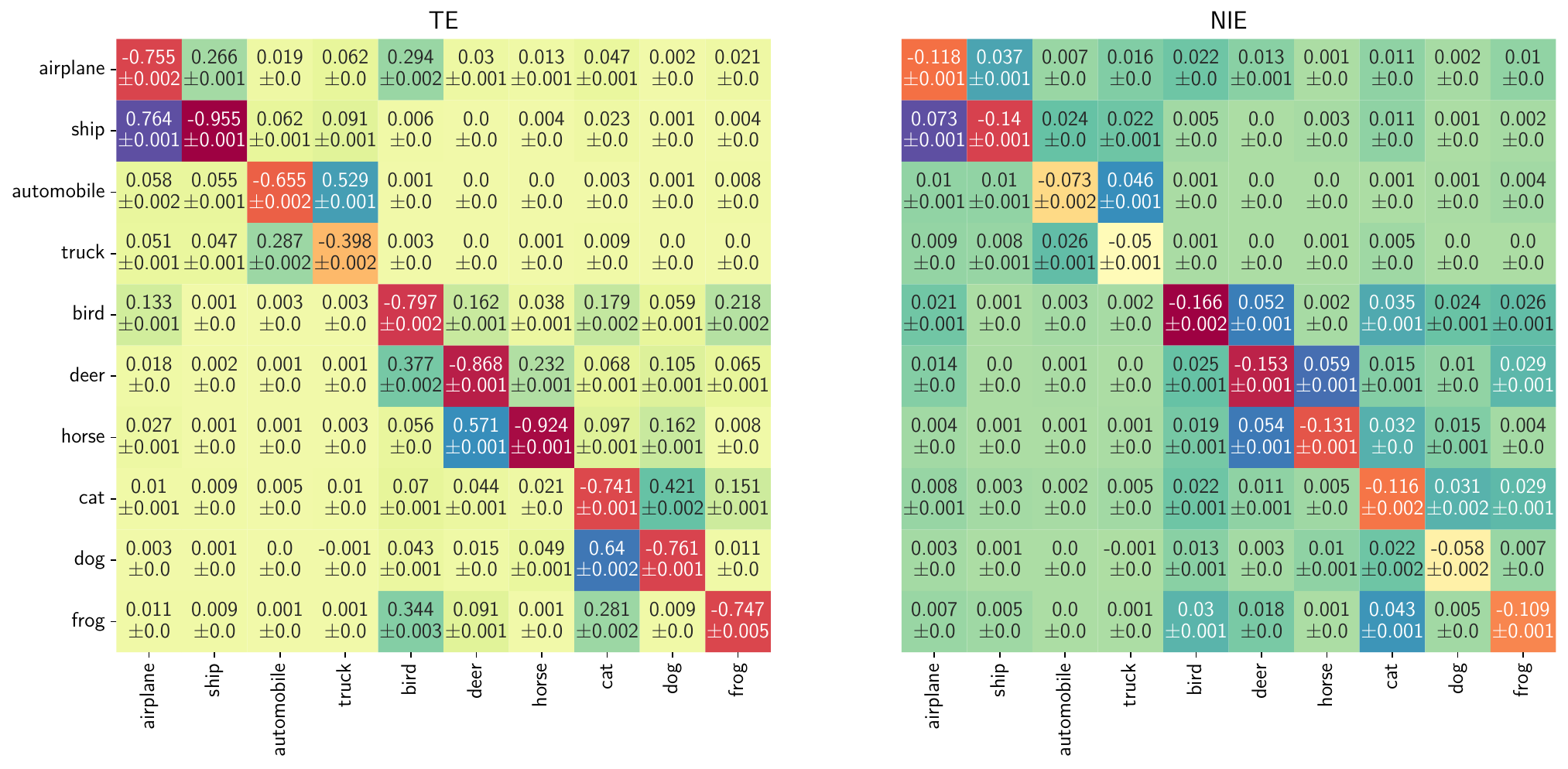}
         \caption{Allocation harm}
         \label{fig:AH-20-fold-full}
     \end{subfigure}
        \caption{Representation harm and total effect and natural indirect effect for allocation harm for SimCLR representations in \cifar\, where the classes are undersampled to $5\%$.}
        \label{fig:RH_AH_simclr_full_20_fold}
\end{figure}




\subsection{Supplementary details for SimSIAM} 
\label{sec:simsiam-cifar-supp}
We use the implementation in \texttt{main.py} file of  \href{https://github.com/Reza-Safdari/SimSiam-91.9-top1-acc-on-CIFAR10}{https://github.com/Reza-Safdari/SimSiam-91.9-top1-acc-on-CIFAR10} for the training of CL models with the SimSIAM protocol. Please see our \texttt{jobs.py} and \texttt{main.py} for the specification of hyperparameters, which are kept the same in both training cases with balanced and imbalanced datasets. We use 3 repetitions of each setting with three different seed values. The rest of the details are the same as in SimCLR training.

\begin{figure}[h]
     \centering
     \begin{subfigure}[b]{0.32\textwidth}
         \centering
         \includegraphics[width=\textwidth]{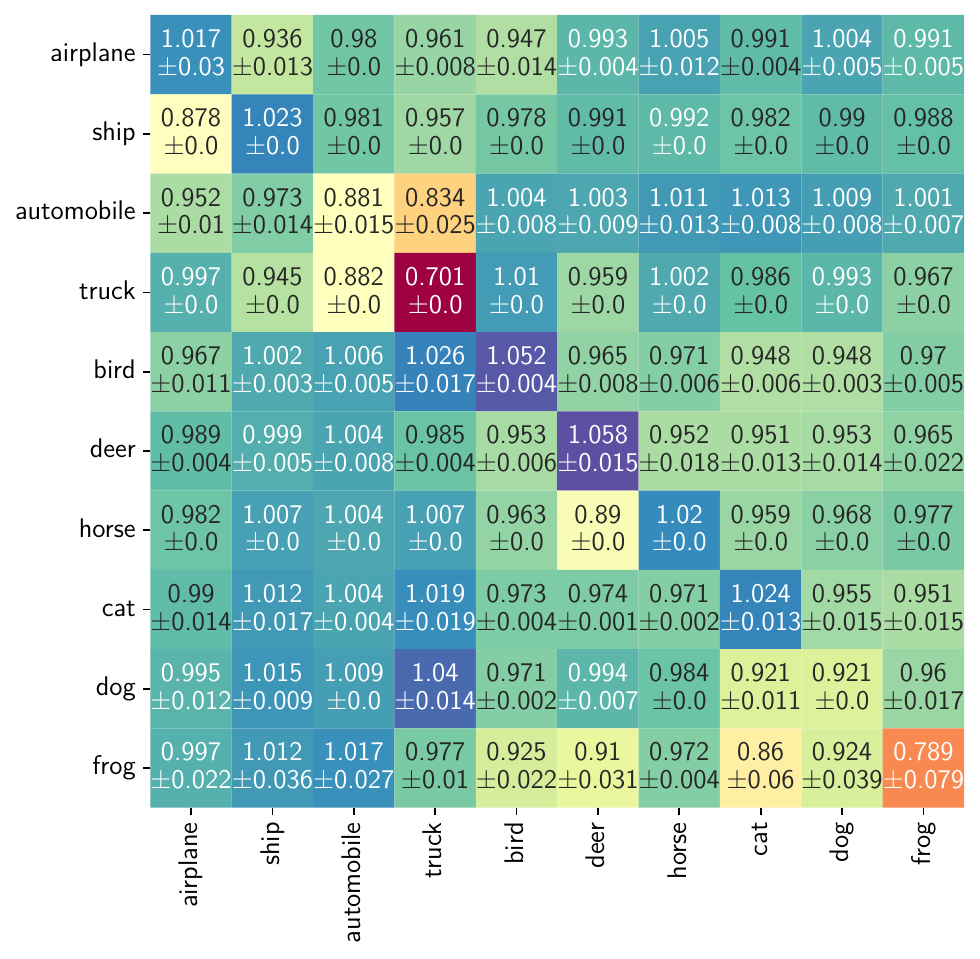}
         \caption{Representation harm}
         \label{fig:rep-harm-simsiam}
     \end{subfigure}
     \hfill
     \begin{subfigure}[b]{0.66\textwidth}
         \centering
         \includegraphics[width=\textwidth]{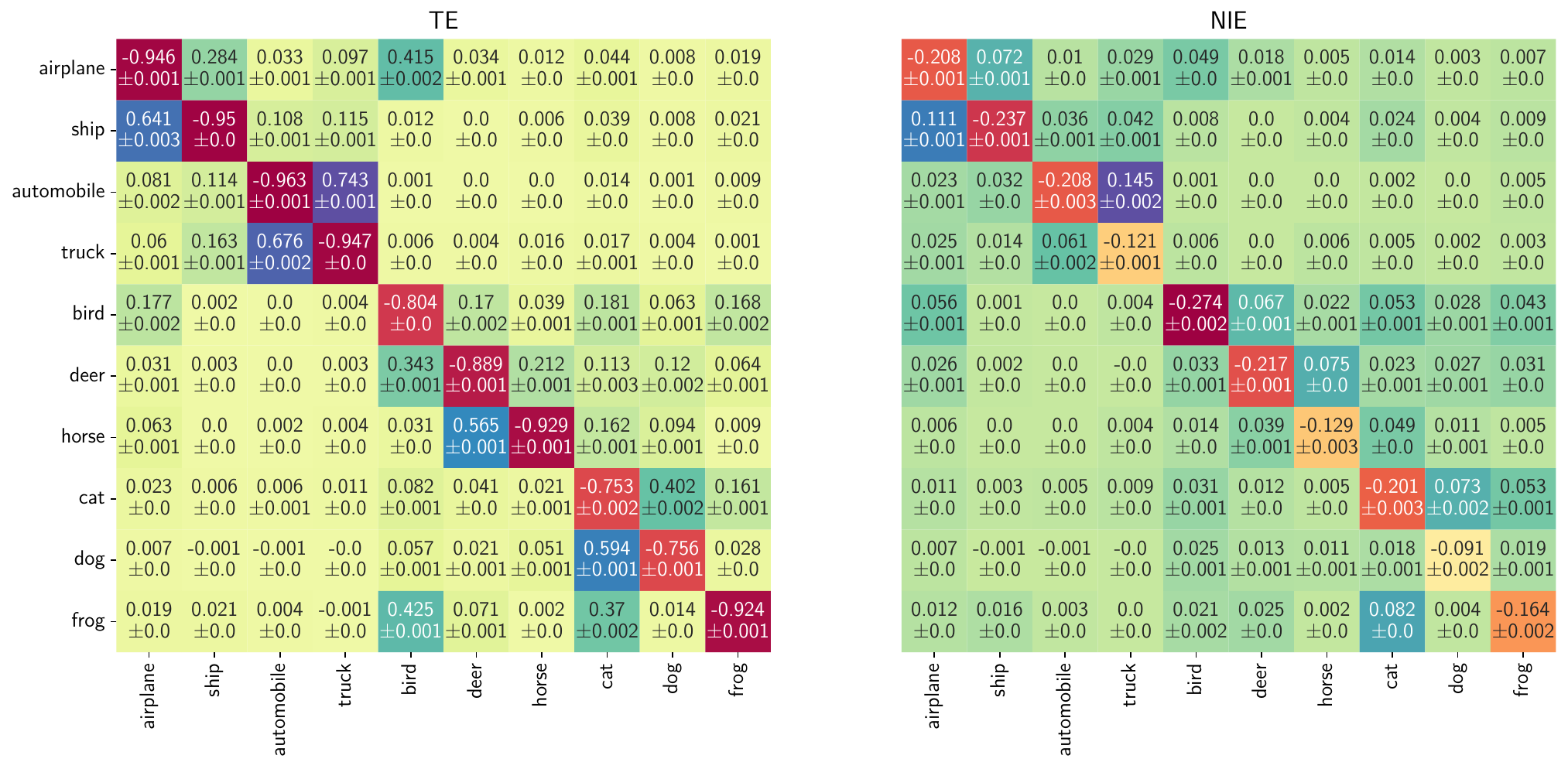}
         \caption{Allocation harm}
         \label{fig:allocation-harm-simsiam}
     \end{subfigure}
        \caption{Representation harm and total effect and natural indirect effect for allocation harm for SimSIAM representations in \cifar\ over three repetitions.}
        \label{fig:RH_AH_simsiam}
\end{figure}



\subsubsection{Representation and allocation harm}

\paragraph{Representation harm:} We plot the representation harm metrics \eqref{eq:rep-harm-metric} in Figure \ref{fig:rep-harm-simsiam}. Similar to SimCLR representations in Section \ref{sec:rep-harm}, representation harm is observed between semantically similar pairs (\texttt{automobiles}, \texttt{trucks}) (RH between them are $0.834 \pm 0.025$ and $0.882 \pm 0.0$), (\texttt{airplanes},  \texttt{ships}) (RH is $0.878 \pm 0.0$) and (\texttt{deer}, \texttt{horse}) (RH is $0.827 \pm 0.003$).

\paragraph{Allocation harm:} In Figure \ref{fig:allocation-harm-simsiam} we plot the TE and NIE metrics \eqref{eq:TE-NIE}, where we observe that representation harm 
causes allocation harm in a downstream classification. This is evident from the NIE metrics in Figure \ref{fig:allocation-harm-simsiam}, as the NIE 
between pairs  (\texttt{airplane}, \texttt{ship}) are $0.111 \pm 0.001$ and $0.072\pm 0.001$, between (\texttt{automobile}, \texttt{truck}) are $0.145 \pm 0.002$ and $0.061\pm 0.002$, and finally between (\texttt{deer}, \texttt{horse}) are $0.075 \pm 0.0$ and $0.039\pm 0.001$. Additionally, these classes suffer a non-trivial reduction in their prediction accuracies due to harm in representation. These observations are similar to our findings regarding allocation harm for SimCLR representation (Section \ref{sec:allocation-harm}).



\subsection{Boosted contrastive learning with SimCLR}

\label{sec:BCL}

We implement the boosted contrastive learning (BCL) algorithm \citep{zhou2022Contrastive} on SimCLR protocol using the implementation in \texttt{train.py} file in \href{https://github.com/MediaBrain-SJTU/BCL}{https://github.com/MediaBrain-SJTU/BCL}. Specifically, we use the \texttt{BCL\_I} version of the algorithm, whose hyperparameter values can be found in the files \texttt{jobs.py} and \texttt{train.py} in the \texttt{codes/cifar/BCL/} folder of our supplementary materials. Each setup is training for three repetitions with different seed values. 
The representation and allocation harm metrics for 10 \cifar\ classes are provided in Figure \ref{fig:RH_AH_bcl}. 
\begin{figure}[h]
     \centering
     \begin{subfigure}[b]{0.32\textwidth}
         \centering
         \includegraphics[width=\textwidth]{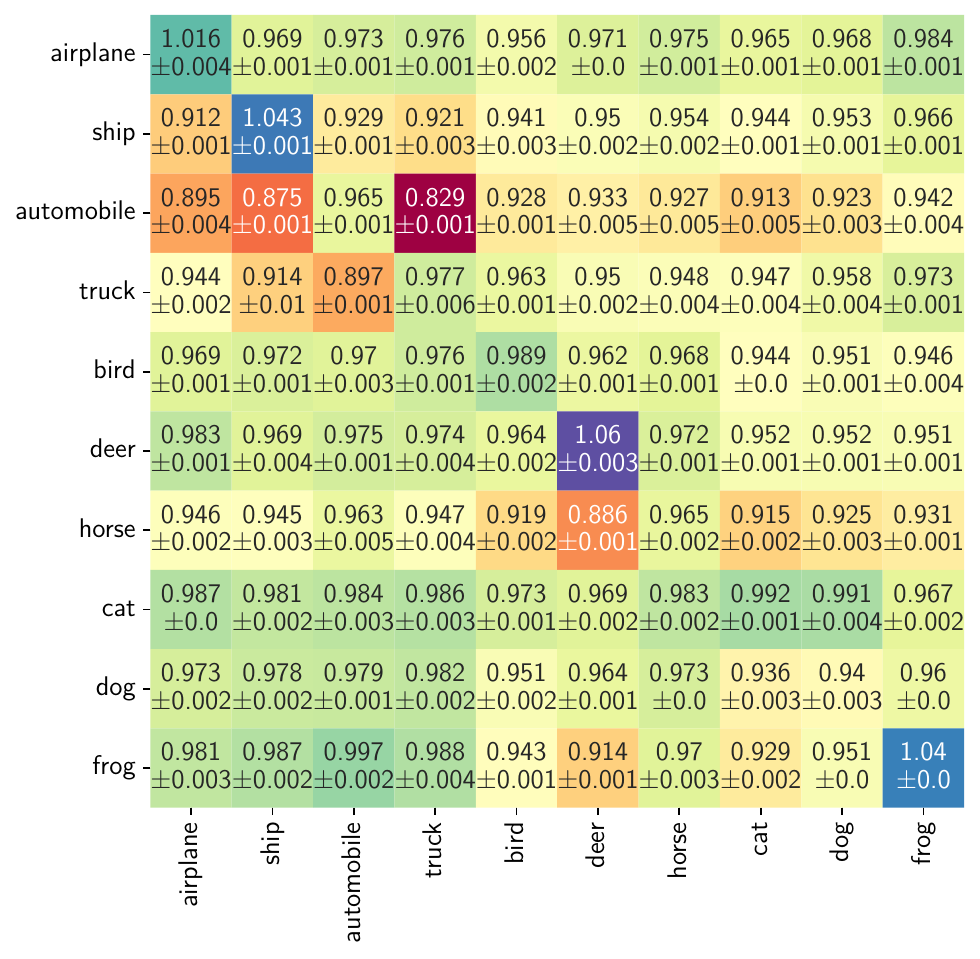}
         \caption{Representation harm}
         \label{fig:rep-harm-bcl}
     \end{subfigure}
     \hfill
     \begin{subfigure}[b]{0.66\textwidth}
         \centering
         \includegraphics[width=\textwidth]{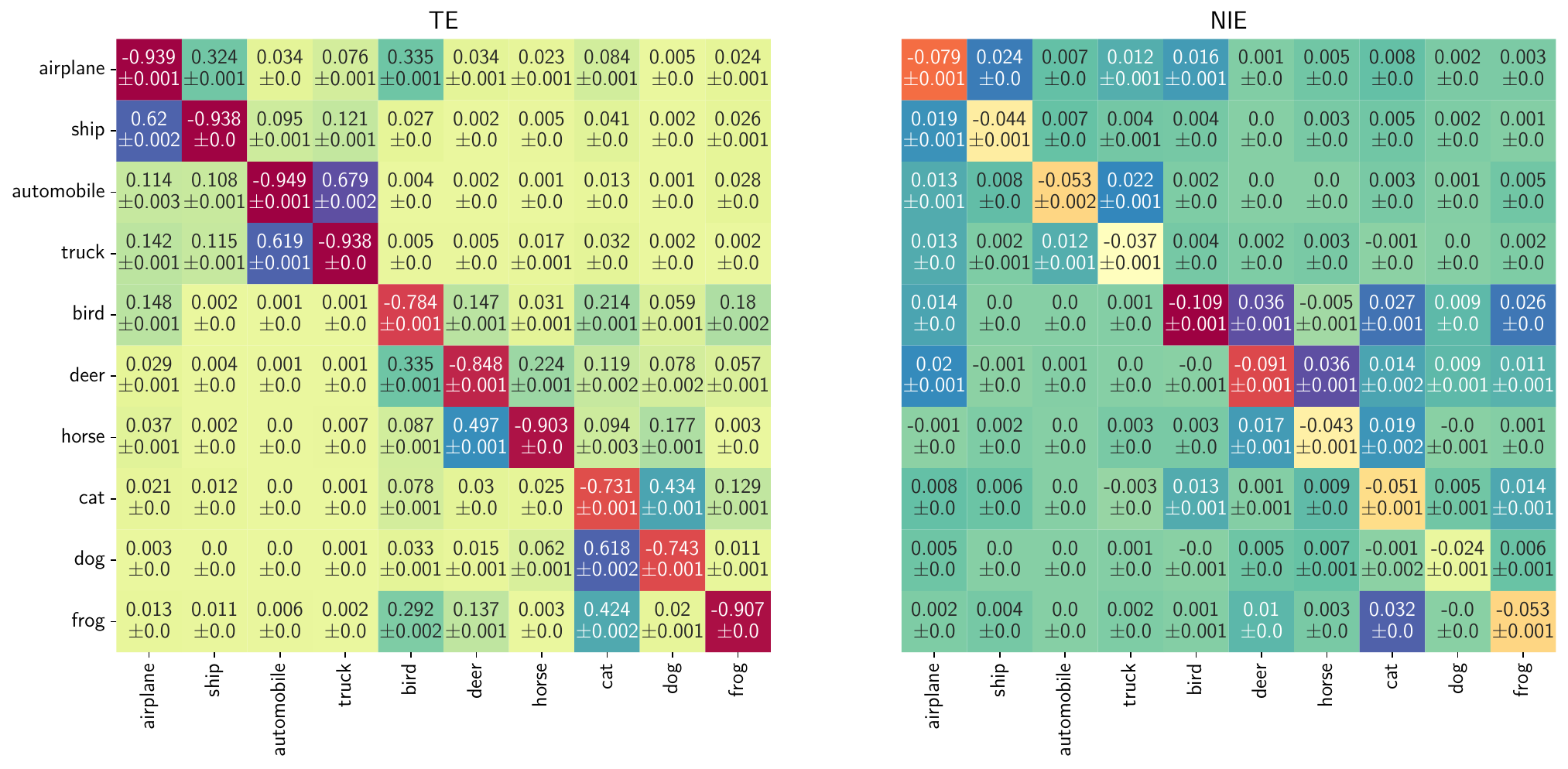}
         \caption{Allocation harm}
         \label{fig:allocation-harm-bcl}
     \end{subfigure}
        \caption{Representation harm and total effect and natural indirect effect for allocation harm for BCL representations in \cifar\ over three repetitions.}
        \label{fig:RH_AH_bcl}
\end{figure}

\section{Supplementary details for under-representation study with \textsc{BiasBios}}
\label{sec:bios_supp}

\subsection{\bios\ dataset details}
\label{subsec:bios_data_details}

In Figure \ref{fig:bios_sample_counts}, we provide the sample counts and frequencies for the 28 occupations and 2 genders (Male and Female) along with Total counts. We note that the dataset is imbalanced both in the occupation dimension as well as the gender dimension within a particular occupation. Particularly, the \texttt{Professor} occupation is the most occurring, while the \texttt{Rapper} occupation is the least occurring. Within each occupation, the proportion of samples of the two genders also varies mimicking societal gender stereotypes. For example, the occupations \texttt{dietician, interior designer, model, nurse, etc.} are dominated by \texttt{Female} samples, while the occupations \texttt{composer, DJ, pastor, rapper, software engineer, surgeon, etc.} are dominated by \texttt{Male} samples.


\begin{figure}[h]
     \centering
     \begin{subfigure}[b]{\textwidth}
         \centering
         \includegraphics[width=\textwidth]{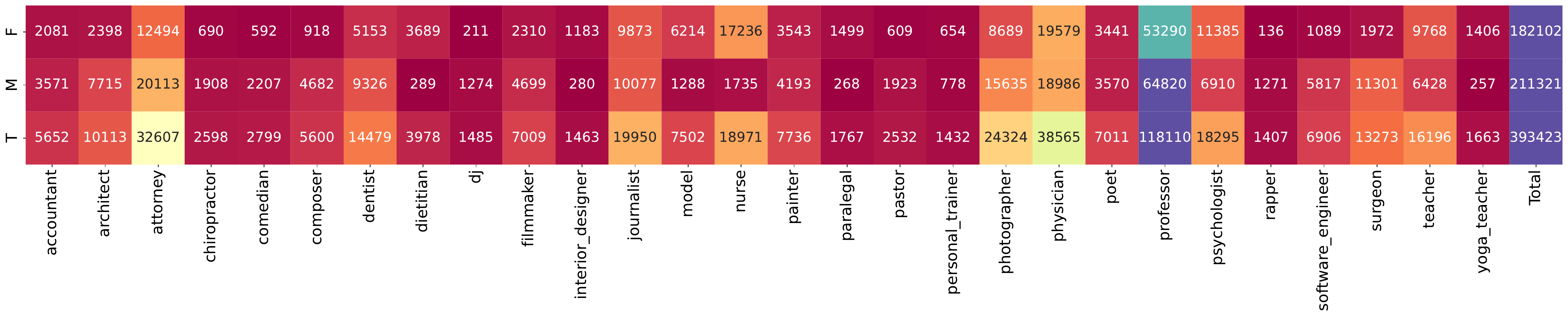}
     \end{subfigure}
     \hfill
     \begin{subfigure}[b]{\textwidth}
         \centering
         \includegraphics[width=\textwidth]{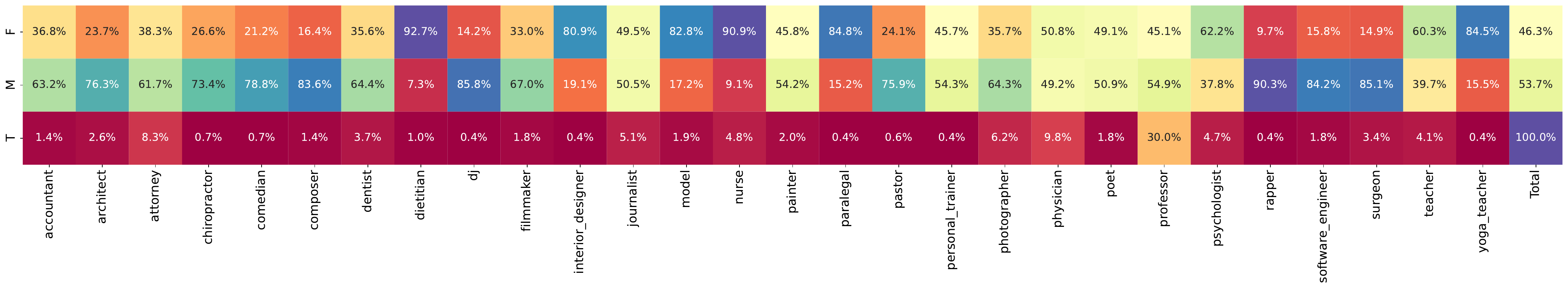}
     \end{subfigure}
     \caption{\bios\ dataset counts (above) and frequencies (below) by occupation and gender. The Y-axis labels are genders (M)ale, (F)emale, and (T)otal counts.}
     \label{fig:bios_sample_counts}
\end{figure}

\begin{wraptable}[9]{r}{0.35\textwidth}
\vspace{-1cm}
\caption{Training parameters for SimCSE.}\label{tab:simcse-parameters}
\begin{tabular}{cc}\\\toprule  
model parameters & value\\\midrule
batch size & $64$  \\
sequence length & $512$ \\
learning rate & $1e^{-5}$\\
training epochs & $1$ \\
\bottomrule
\end{tabular}
\vspace{-0.4cm}
\end{wraptable} 

\subsection{SimCSE experimental setup}
\label{subsec:simclr_exp_setup}

We randomly divide the $\sim$400k \bios\ dataset into the following three splits: 65\% as training set, 10\% as validation set, and 25\% as test set. We use the official SimCSE implementation\footnote{\url{https://github.com/princeton-nlp/SimCSE}} to train the embedding model, with the optimal parameters used in the reported experiments listed in Table \ref{tab:simcse-parameters}. To find these optimal parameters, we perform a grid search over the following parameters: training epochs $\in \{1, 3, 5\}$, learning rate $\in \{3e^{-5}, 1e^{-5}, 5e^{-5}\}$, batch size $\in \{64, 128, 256, 512\}$, and sequence lengths $\in \{32, 64, 128, 256, 512\}$, and then train a Logistic Regression model on top of each these embedding models. We select the parameters which result in the best occupation prediction accuracy on the validation set.

\subsection{Representation harm}
\label{subsec:grh_bios_all}

\begin{figure}
    \centering
    \vspace{-0.2in}
    \includegraphics[width=\linewidth]{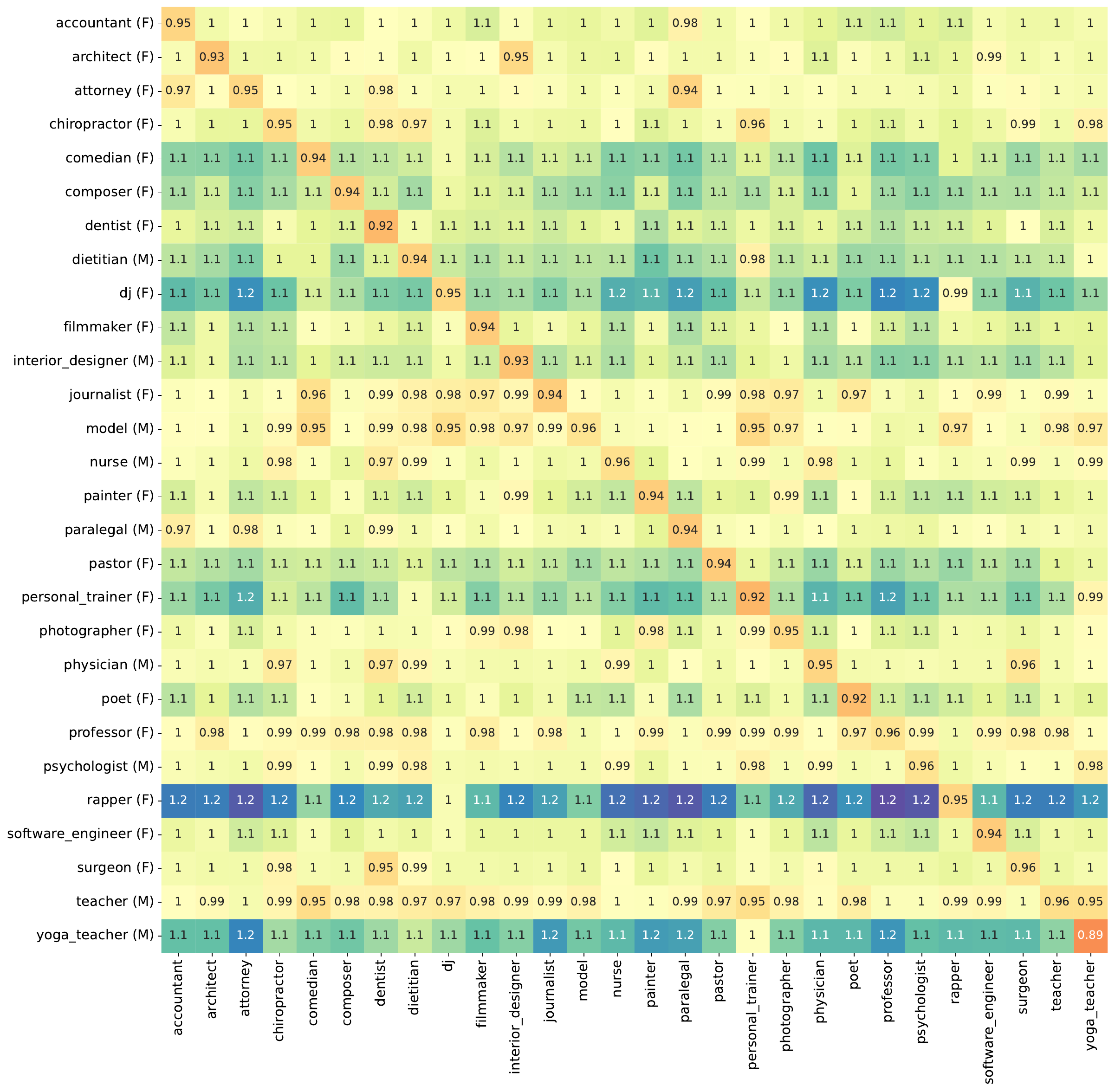}
    \vspace{-0.2in}
    \caption{Gender RH for all occupations in \bios\ dataset.}
    \label{fig:GRH_all_occps_bios}
\end{figure}


We present the GRH results for all occupation in the \bios\ dataset in Figure \ref{fig:GRH_all_occps_bios} (the F and M in the row labels indicate the under-represented gender for the corresponding occupation). We focus on the off-diagonal entries which should be >1 when there is no representation harm. We observe several deviations from this rule, broadly classifying into two types of deviations: (1) when representations of the under-represented group for an occupation collapse with a similar occupation, and (2) when representations of the under-represented group for an occupation collapses with several occupations. For the first type of deviation, we especially note \texttt{GRH(architect (F), interior\_designer) = 0.95}, \texttt{GRH(chiropractor (F), personal\_trainer) = 0.96}, \texttt{GRH(nurse (M), dentist) = 0.97}, \texttt{GRH(physician(M), surgeon) = 0.96}, and \texttt{GRH(surgeon (F), dentist) = 0.95} among others. These mentioned deviant groups are of occupations that are highly related. For the second type of deviation, we note that the \texttt{GRH} of Models (M), Journalists (F), and Teachers (M) have values $<1$ for many occupations in the dataset.

\subsection{Allocation harm}
\label{sec:AH-bios-supp}

We consider the task of occupation prediction from the SimCSE representations we learned on the \bios\ dataset using a logistic regression model trained on the same data as we used to learn the representations. Such a decision-making system may be used to assist in recruiting or hiring, applications where allocation harm can exacerbate gender disparity. To counteract the effect of under-representation at the supervised learning stage (thus focusing our attention on the effect of under-representation on CL), we reweigh the samples to achieve gender parity within each occupation when training the logistic regression following prior work \citep{idrissi2022simple,sagawa2020Investigation}. See Appendix \ref{subsec:bios_allocation_harm} for results with other weighting strategies.

\paragraph{Metric:} We define gender allocation harm (GAH) similarly to Equalized Odds, a popular group fairness criteria in the algorithmic fairness literature \citep{hardt2016Equality}:
\begin{equation}
\label{eq:allocation-harm-bios}
\text{GAH}(l,m) = P(\hat y = m\mid y = l, \text{female}) - P(\hat y = m \mid y = l, \text{male}).
\end{equation}
GAH can be understood as the difference of confusion matrices corresponding to each gender with entries far from 0 implying allocation harm. The average of the absolute values of the GAH diagonal is a common way to quantify violations of Equalized Odds.

\begin{wrapfigure}[21]{r}{0.55\linewidth}
\vspace{-0.27in}
    \centering
    \includegraphics[width=\linewidth]{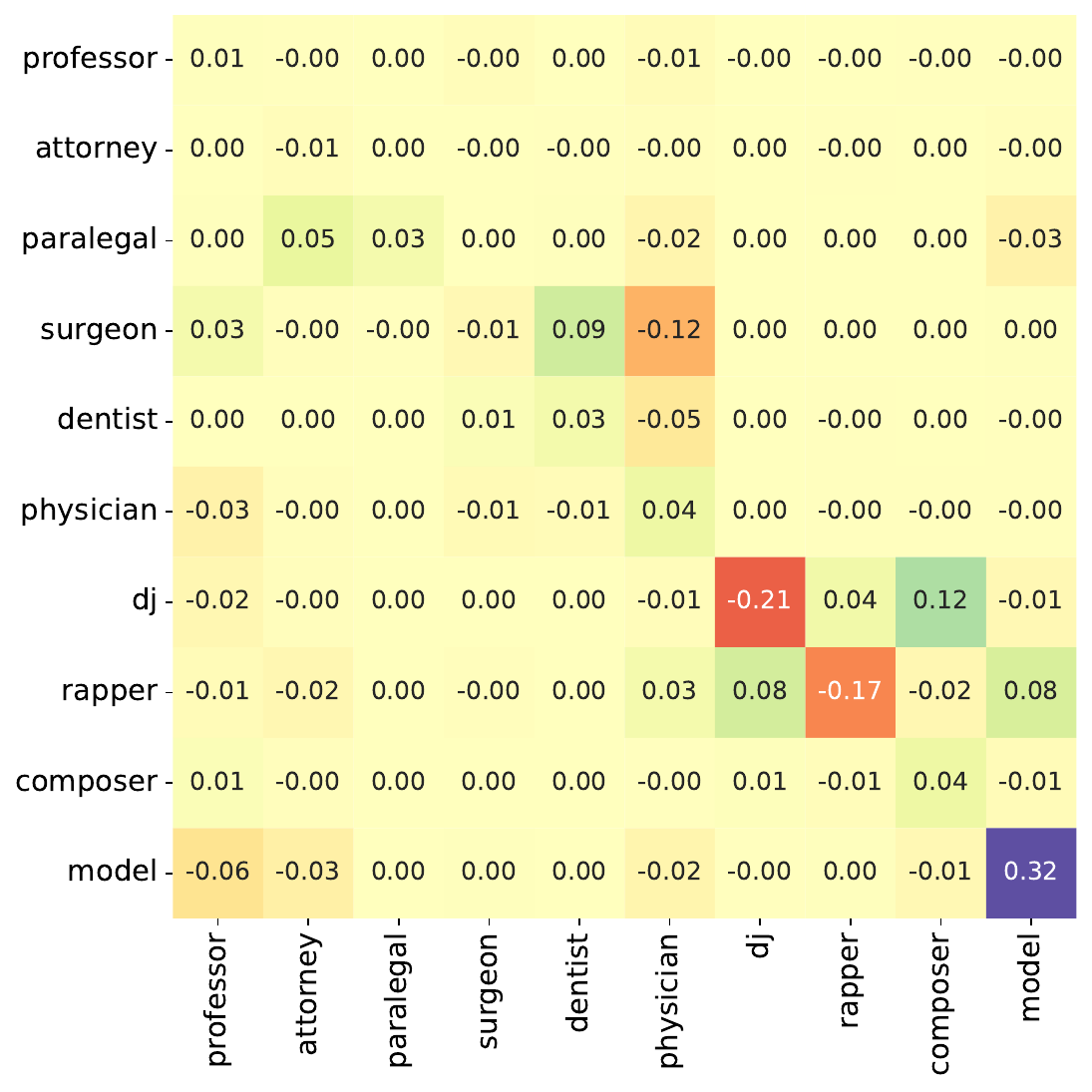}
    \vspace{-0.25in}
    \caption{Gender AH in \bios\ dataset.}
    \label{fig:allocation-harm-bios}
\end{wrapfigure}

\paragraph{Results:} In Figure \ref{fig:allocation-harm-bios} we present the GAH results for a subset of occupations. The complete set of GAH results is provided in Appendix \ref{subsec:bios_allocation_harm}. Inspecting the diagonal entries, we see large performance gaps between genders for occupations such as \texttt{rapper}, \texttt{DJ}, and \texttt{model} (within each of these occupations the majority gender corresponds to at least 85\% of the samples; these occupations are also underrepresented in the data with representation ranging from 0.4\% for \texttt{rapper} to 1.9\% for \texttt{model}). These results demonstrate that reweighing for gender parity when training the predictor may be insufficient to repair harms due to underrepresentation at the CL stage of the pipeline (it does, however, help to mitigate some of the biases of the unweighted model, as we show in the Appendix \ref{subsec:bios_allocation_harm}). We also note the mistake patterns for related occupations: female DJs are predicted as composers a lot more often than male ones, and female surgeons tend to be mistaken for dentists while male surgeons for physicians, despite similar performance on surgeons across genders. The \texttt{surgeon} example demonstrates that it may be insufficient to compare only class accuracies across genders (as is often done to measure group fairness violations via Equalized Odds) when quantifying allocations harms.

Compared to gender representation harms, we note that while the two have some overlap in terms of genders and occupations they are affecting, there are also differences, e.g., representations for female attorneys are closer to female paralegals than to male attorneys, but it does not manifest in the allocation harm analysis. We hypothesize that differences in which groups were affected by allocation and representation harms in our experiments might be due to the logistic regression model used for prediction utilizing only a subset (or subspace) of features most relevant to the task, while representation harm analysis takes into account all features.

The allocation harm does not have to be gender-specific. Similar to our \cifar\ case study, samples from an underrepresented occupation can also be mistaken for a related occupation at a similar rate between genders. We observe this for the \texttt{paralegal} occupation which corresponds to about 0.4\% of the training samples. Despite the gender imbalance (85\% of paralegals in the data are female), both male and female class accuracy is the worst across occupations (14\% for females and 11\% for males) and the majority of them (66\% for females and 61\% for males) is predicted as attorneys, which is a more frequent class (8.3\% of the samples are attorneys; see Appendix \ref{sec:bios_supp} for extended allocation harm analysis for occupations).

Overall, similar to the \cifar experiment, underrepresentation leads to the allocation harms of mistaking underrepresented groups for a related group. In some cases, the related group may be the same for both genders (e.g., in the case of paralegals), but in others, it can differ across genders (e.g., in the case of surgeons and DJs). Both cases can cause allocation harms for people from underrepresented occupations, whereas the latter additionally exacerbates gender stereotypes.

\subsection{Other details for allocation harm}
\label{subsec:bios_allocation_harm}

We experiment with three weighting strategies to counter the gender imbalance in the \bios\ dataset, namely: (1) Each sample is equally weighted, (2) We balance for gender imbalance within each class by weighting each sample as $W = \frac{N_y}{2 N_y^g N}$ where $N$ is the total number of samples in the dataset, $N_y$ is the number of samples within a class and $N_y^g$ is the number of samples of gender $g$ in class $y$, and (3) We balance for gender and class imbalance within the dataset by weighting each sample as $W = \frac{1}{2 * |y| * N_y^g}$ where $|y|$ is the number of classes and $N_y^g$ is the number of samples of gender $g$ in class $y$.

In Figure \ref{fig:bios_GAH_all} we provide the Gender Allocation Harms for all occupations in the \bios\ dataset and for the three weighting strategies. Similar to our observations in Section \ref{sec:bios}, inspecting the diagonal entries, we see large performance gaps between genders of the same occupation across all three weighting strategies. We do, however, see that reweighing for gender parity (Figure \ref{fig:bios_alloc_weightedLR}) does mitigate gender bias to a certain extent. For example, for the occupation \texttt{nurse} the \texttt{GAH} is mitigated, but it is merely reduced for \texttt{DJ} and \texttt{model}. Balancing for class frequencies in addition to gender (Figure \ref{fig:bios_alloc_weightedacross}) has little effect on the GAH.

In Figure \ref{fig:bios_AH} we present confusion matrices for the three weighting strategies to quantify allocation harms (AH) for occupations (irrespective of gender). For occupations such as \texttt{paralegal} and \texttt{interior designer} we observed a small amount of GAH, however, we see that performance on these occupations is poor for the unweighted and gender-balanced weighting strategies (Figures \ref{fig:bios_AH_stdLR} and \ref{fig:bios_AH_weightedLR}), as they are often confused with the related occupations (\texttt{attorney} and \texttt{architect} respectively). In this case, the representation harm is due to the under-representation of these occupations as opposed to the gender imbalance within occupations that we observed in the GAH experiments. Accounting for class imbalance in the weighting strategy helps to mitigate some of these biases (Figure \ref{fig:bios_AH_weightedacross}), e.g., the AH for \texttt{interior designer} is largely mitigated, while AH for \texttt{paralegal} is reduced.

Overall, we conclude that allocation harms due to under-representation in the contrastive learning stage can be \emph{partially} mitigated during the supervised learning stage by curating/reweighing the data to have equal representation of classes and groups.

\begin{figure}[h]
     \centering
     \begin{subfigure}[b]{0.32\textwidth}
         \centering
         \includegraphics[width=\textwidth]{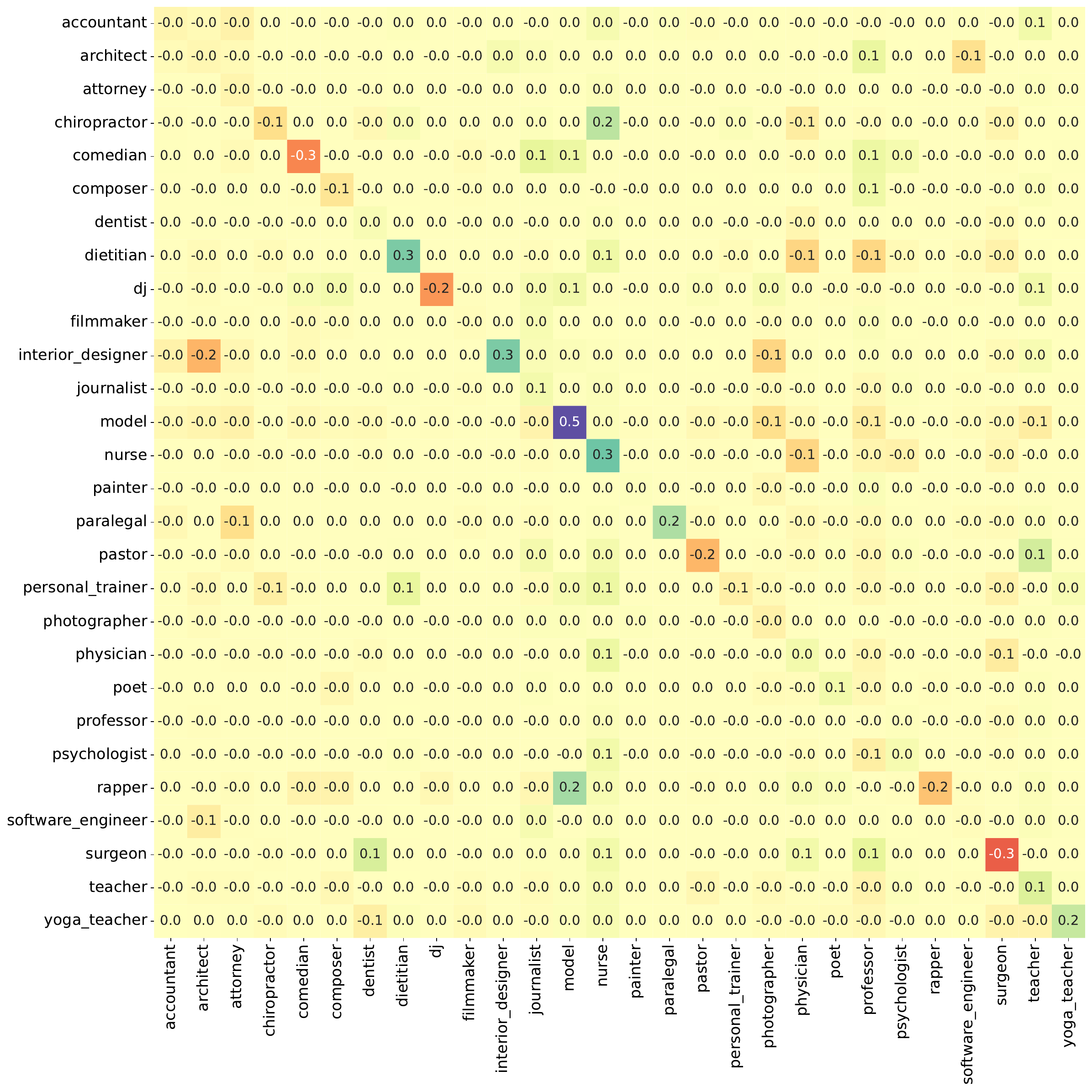}
         \caption{Equal weight for each sample}
         \label{fig:bios_alloc_stdLR}
     \end{subfigure}
     \hfill
     \begin{subfigure}[b]{0.32\textwidth}
         \centering
         \includegraphics[width=\textwidth]{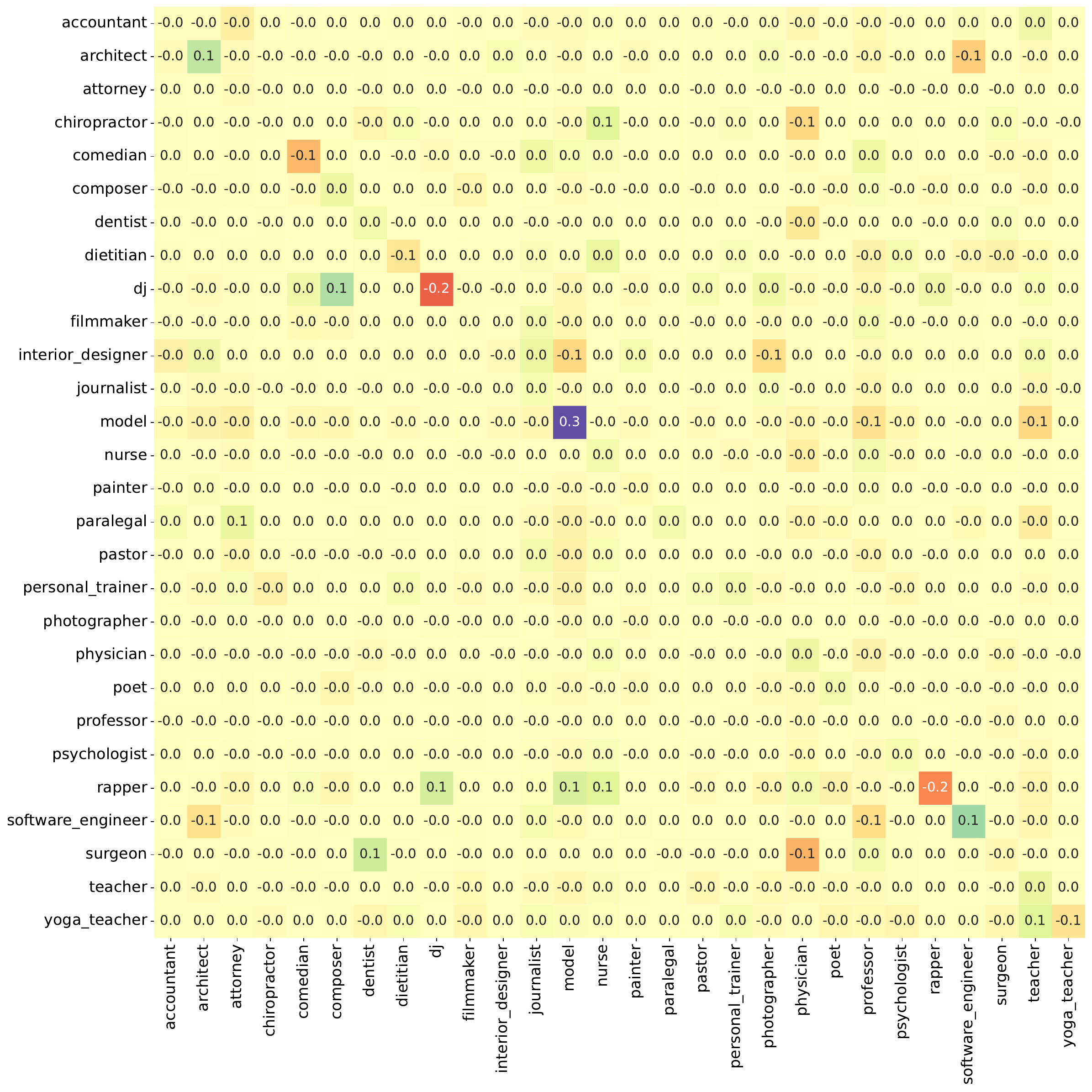}
         \caption{Gender-balanced}
         \label{fig:bios_alloc_weightedLR}
     \end{subfigure}
     \hfill
     \begin{subfigure}[b]{0.32\textwidth}
         \centering
         \includegraphics[width=\textwidth]{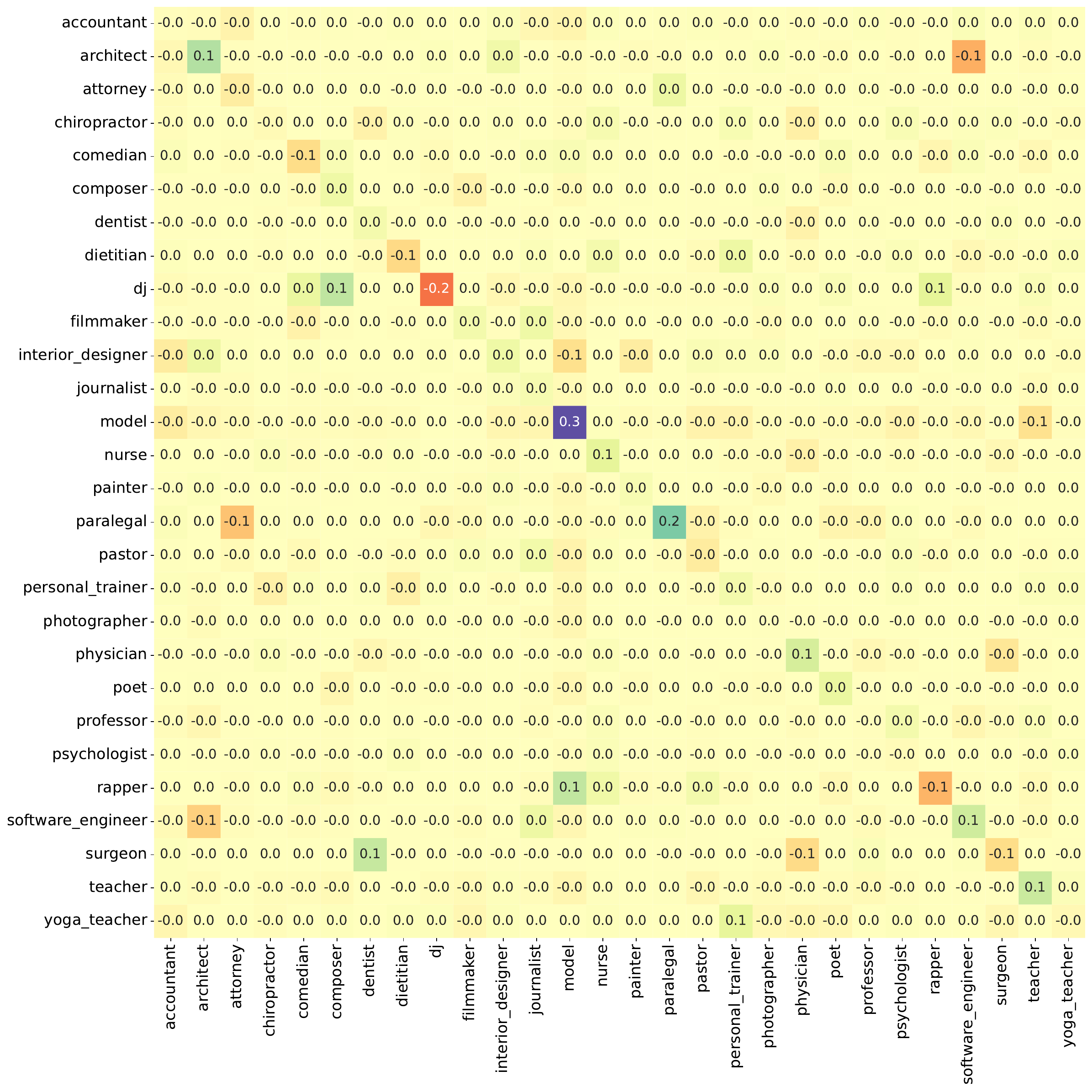}
         \caption{Gender and class-balanced}
         \label{fig:bios_alloc_weightedacross}
     \end{subfigure}
        \caption{Gender AH for all occupations in \bios\ across the three weighting strategies.}
        \label{fig:bios_GAH_all}
\end{figure}

\begin{figure}[h]
     \centering
     \begin{subfigure}[b]{0.32\textwidth}
         \centering
         \includegraphics[width=\textwidth]{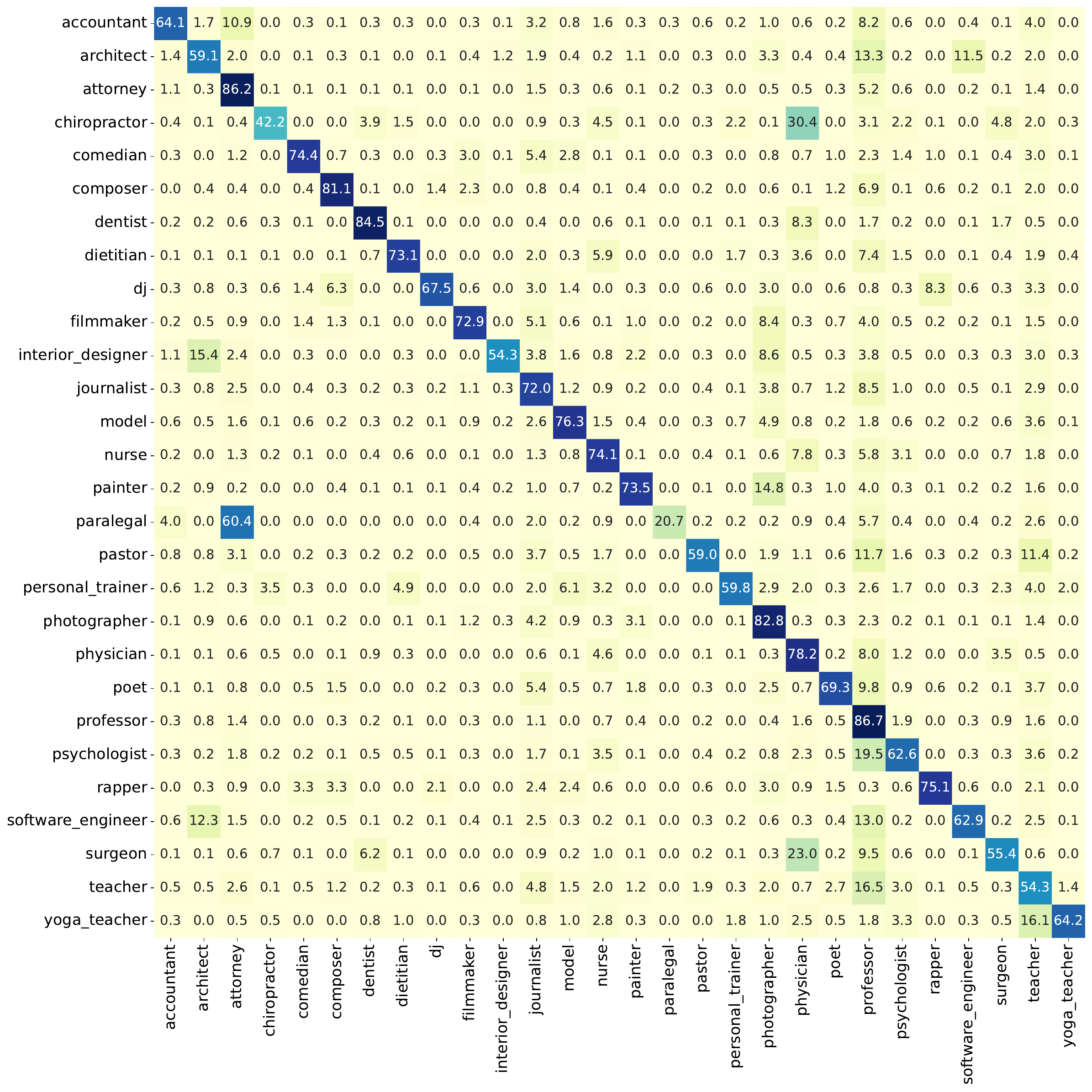}
         \caption{Equal weight for each sample}
         \label{fig:bios_AH_stdLR}
     \end{subfigure}
     \hfill
     \begin{subfigure}[b]{0.32\textwidth}
         \centering
         \includegraphics[width=\textwidth]{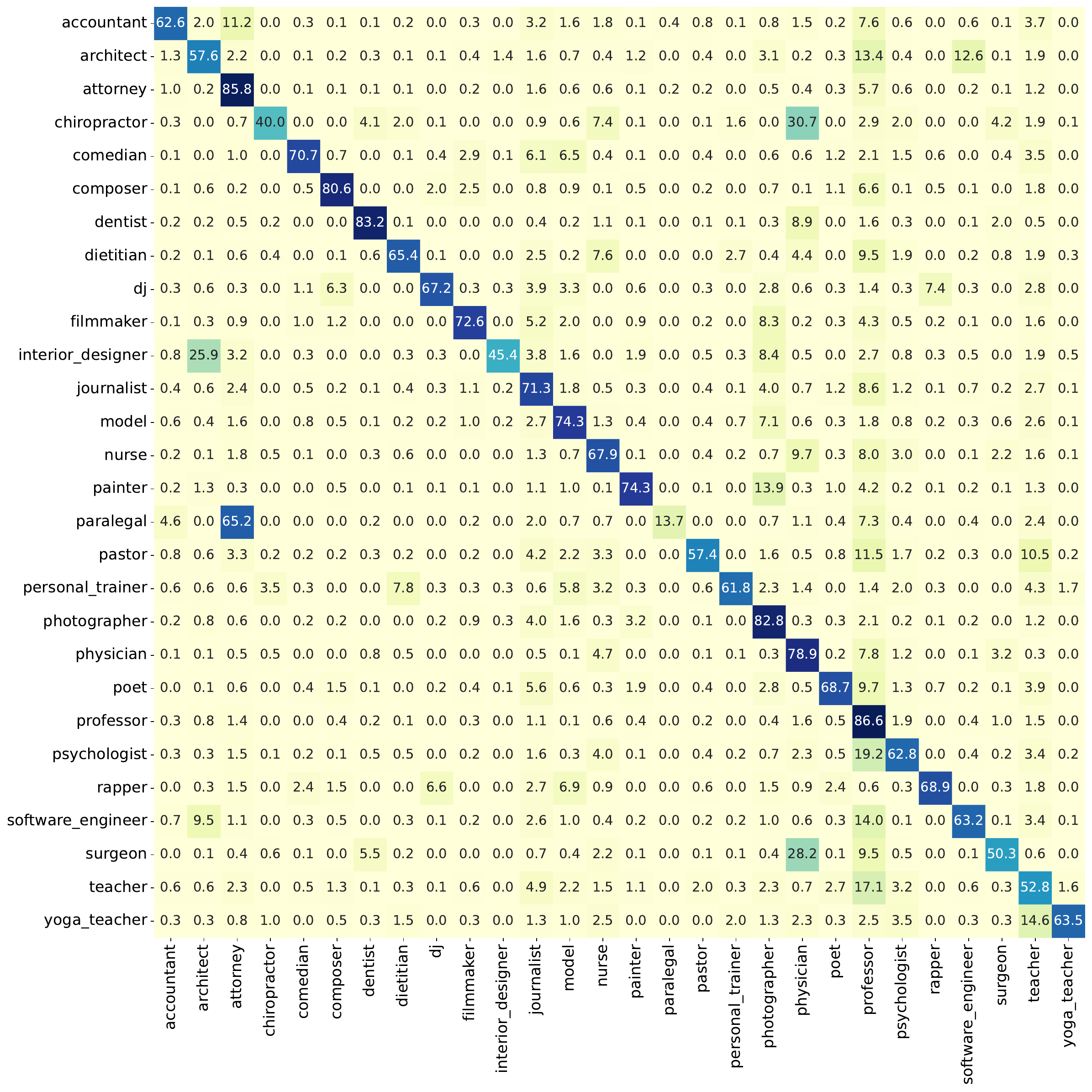}
         \caption{Gender-balanced}
         \label{fig:bios_AH_weightedLR}
     \end{subfigure}
     \hfill
     \begin{subfigure}[b]{0.32\textwidth}
         \centering
         \includegraphics[width=\textwidth]{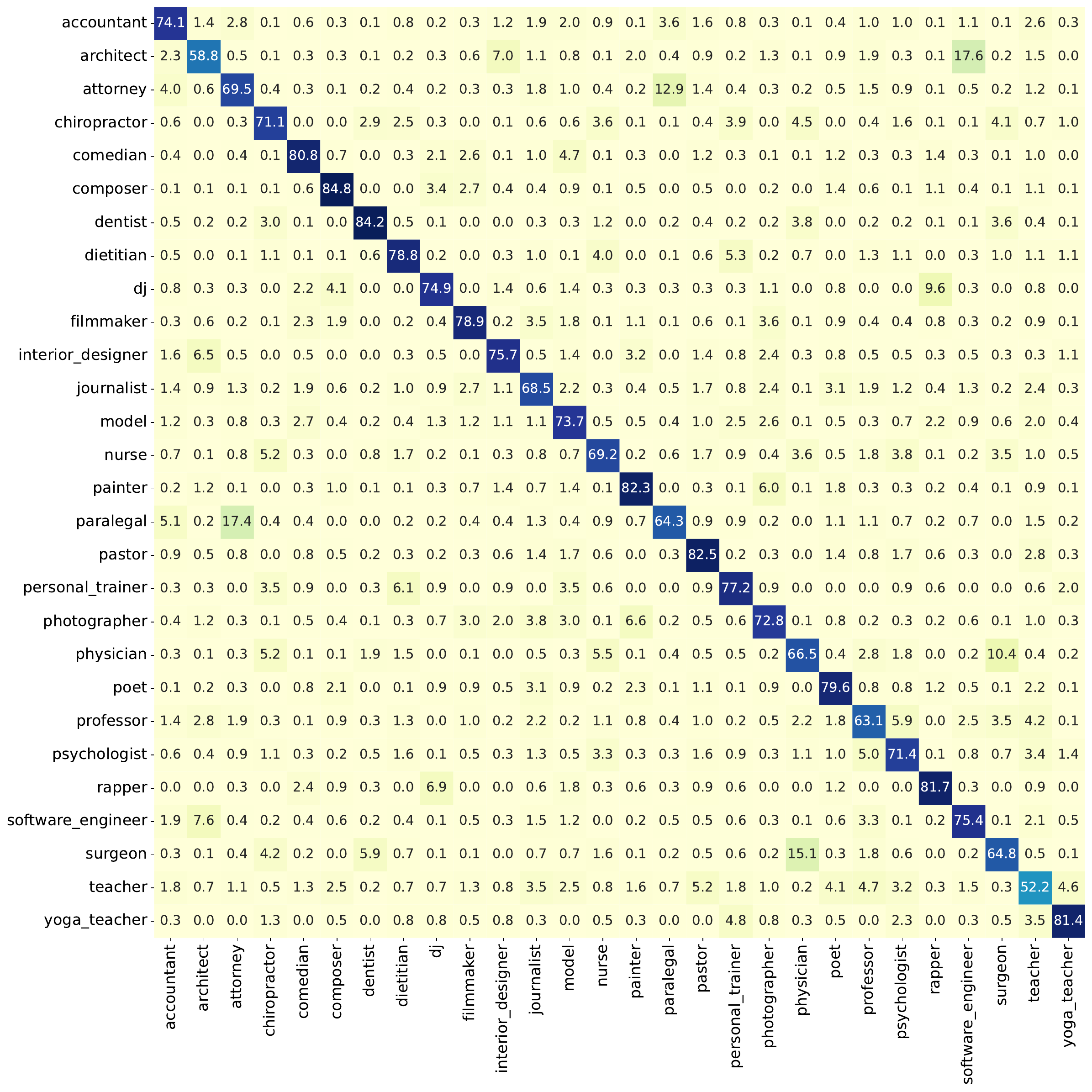}
         \caption{Gender and class-balanced}
         \label{fig:bios_AH_weightedacross}
     \end{subfigure}
        \caption{AH for all occupations in \bios\ across the three weighting strategies.}
        \label{fig:bios_AH}
\end{figure}

\section{A proof of Theorem \ref{th:layer-peeled-node2vec}}

We recall the CL loss \eqref{eq:node2vec-layer-peeled} and restate the theorem \ref{th:layer-peeled-node2vec} for the reader's convenience. 
The minimization problem in a layer-peeled setting is 
\begin{equation}
    \begin{aligned}
        \min\nolimits_{v_i \in \bS^{d-1}}\bL_\text{CL}(V), \\\textstyle
    \bL_{\text{CL}}(V) \triangleq - \frac1n \sum_{i\in [n]}  \frac{1}{\sum_{j\in[n]}e(i,j)} \sum_{j\in [n]} e(i, j) \log \Big\{\frac{\exp(\nicefrac{v_i^\top v_j}{\tau})}{\frac1n \sum_{l \in [n]}\exp(\nicefrac{v_i^\top v_l}{\tau})}\Big\}
    \end{aligned}
\end{equation}
where $\bS^{d-1}$ is the unit sphere in $\reals^d$. 
\begin{theorem}
    At $n\to \infty$ the optimum representations obtained from the minimization of CL loss in \eqref{eq:node2vec-layer-peeled} satisfy the following: $v_i^\star \stackrel{a.s.}{=} h_{Y_i}^\star$,
    where $\stackrel{a.s.}{=}$ denotes almost sure equality and $\{h_k^\star\}_{k \in [K]}$ is a minimizer for \begin{equation}
    \textstyle  \underset{h_k\in \bS^{d-1}}{\min}  - 
    \sum_{k_1 = 1}^K \pi_{k_1}
    \frac{ 
    \sum_{k_2 = 1}^K \pi_{k_2}  \alpha_{k_1, k_2} \frac{ h_{k_1}^\top h_{k_2}}{\tau}
      }{
      \sum_{k_2 = 1}^K \pi_{k_2} \alpha_{k_1, k_2}
      } 
      +
      \sum_{k_1 = 1}^K \pi_{k_1}
     \log \Big \{
    \sum_{k_3 = 1}^K  \pi_{k_3}
    e^{
    \nicefrac{h_{k_1}^\top h_{k_3}}{\tau}
    }
    \Big\}\,.
    \end{equation}
Note that the objective in \eqref{eq:node2vec-neural-collapse} is a weighted version of the node2vec objective in \eqref{eq:node2vec-layer-peeled} applied to common group-wise representations $h_{Y_i}$. 
\end{theorem}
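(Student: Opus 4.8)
The first step is algebraic: since $\frac{1}{\sum_j e(i,j)}\sum_j e(i,j)=1$, the logarithm in \eqref{eq:node2vec-layer-peeled} separates and the loss becomes a bilinear piece plus a log-partition piece,
\[
\bL_\text{CL}(V)=\underbrace{-\frac{1}{n\tau}\sum_{i\in[n]}\frac{1}{\sum_j e(i,j)}\sum_{j\in[n]}e(i,j)\,v_i^\top v_j}_{L_1(V)}\;+\;\underbrace{\frac1n\sum_{i\in[n]}\log\Big(\frac1n\sum_{l\in[n]}e^{v_i^\top v_l/\tau}\Big)}_{L_2(V)}.
\]
I would then let $n\to\infty$ using three almost-sure facts about $\text{SBM}(\Pi,A)$: (i) $n_k/n\to\pi_k$; (ii) with $d_i:=\sum_j e(i,j)$ and $\bar\alpha_k:=\sum_{k'\in[K]}\pi_{k'}\alpha_{k,k'}$ — which is positive by assumption, so each $d_i$ is genuinely of order $n$ — the degrees concentrate uniformly, $\max_{i:Y_i=k}|d_i/n-\bar\alpha_k|\to0$; and (iii) the spectral concentration $\|E-\bar A\|_{\mathrm{op}}=O(\sqrt n)$, where $E=[[e(i,j)]]_{i,j\in[n]}$ and $\bar A=[[\alpha_{Y_i,Y_j}]]_{i,j\in[n]}$, which holds because, conditionally on the block labels, $E-\bar A$ has independent mean-zero bounded entries.

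\textbf{The bilinear term.} Let $\bar v_k:=\frac1{n_k}\sum_{j=1}^{n_k}v_{k,j}$; by Assumption \ref{assmp:rep-convergence} these converge to some $h_k$ in the closed unit ball. With $M:=VV^\top\succeq0$ (so $\|M\|_\ast=\Tr(M)=n$), trace duality gives
\[
\Big|\sum_{i,j}\big(e(i,j)-\alpha_{Y_i,Y_j}\big)v_i^\top v_j\Big|=|\langle M,E-\bar A\rangle|\le\|M\|_\ast\|E-\bar A\|_{\mathrm{op}}=O(n^{3/2}),
\]
\emph{uniformly over every} $V\in(\bS^{d-1})^n$. Combining this with (i)--(ii) — replacing $1/d_i$ by $1/(n\bar\alpha_{Y_i})$ and $e(i,j)$ by $\alpha_{Y_i,Y_j}$ at a uniform cost of $o(1)$, and using $\sum_{i:Y_i=k}v_i=n_k\bar v_k$ — shows that $L_1(V)$ equals, up to $o(1)$ uniformly in $V$, a function of the block averages alone, and that along the minimizing sequence $L_1(V^\star)\to-\frac1\tau\sum_{k_1}\pi_{k_1}\big(\sum_{k_2}\pi_{k_2}\alpha_{k_1,k_2}h_{k_1}^\top h_{k_2}\big)\big/\big(\sum_{k_2}\pi_{k_2}\alpha_{k_1,k_2}\big)$, the first term of \eqref{eq:node2vec-neural-collapse}.

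\textbf{The log-partition term and the squeeze.} For $L_2$ I would invoke convexity twice: grouping the inner sum by block and applying Jensen to $\exp$ yields $\frac1n\sum_l e^{v_i^\top v_l/\tau}\ge\sum_{k_3}\frac{n_{k_3}}{n}e^{v_i^\top\bar v_{k_3}/\tau}$, and then applying Jensen to the convex map $v\mapsto\log\big(\sum_{k_3}\frac{n_{k_3}}{n}e^{v^\top\bar v_{k_3}/\tau}\big)$ over $i$ within each block yields $L_2(V)\ge\sum_{k_1}\frac{n_{k_1}}{n}\log\big(\sum_{k_3}\frac{n_{k_3}}{n}e^{\bar v_{k_1}^\top\bar v_{k_3}/\tau}\big)$, which tends to the second term of \eqref{eq:node2vec-neural-collapse}; both steps are equalities precisely when the representations are constant on each block, and strict convexity makes the gap bounded away from $0$ whenever some $\|\bar v_k\|<1$. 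Write $\bL^\infty$ for the objective in \eqref{eq:node2vec-neural-collapse} and let $\{h_k^\star\}$ minimize it over $(\bS^{d-1})^K$. Testing the block-constant configuration $v_i\equiv h_{Y_i}^\star$ (for which $L_1$ and both Jensen steps are tight) gives $\limsup_n\bL_\text{CL}(V^\star)\le\bL^\infty(\{h_k^\star\})=\min\bL^\infty$, while the two lower bounds give $\liminf_n\bL_\text{CL}(V^\star)\ge\bL^\infty(\{h_k\})+(\text{limit of the nonnegative Jensen gap})$ with $h_k=\lim_n\bar v_k^\star$. Comparing forces the Jensen gap to $0$, hence the within-block dispersion to vanish, i.e. $\max_i\|v_i^\star-\bar v_{Y_i}^\star\|\to0$; therefore $\|h_k\|=1$, $v_i^\star\to h_{Y_i}$, and $\bL^\infty(\{h_k\})\le\bL^\infty(\{g_k\})$ for every $\{g_k\}\subset\bS^{d-1}$ (test $v_i\equiv g_{Y_i}$), so $\{h_k\}$ minimizes $\bL^\infty$ over the sphere. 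This is the asserted $v_i^\star\stackrel{a.s.}{=}h_{Y_i}^\star$, read modulo the common orthogonal symmetry of \eqref{eq:node2vec-layer-peeled} and \eqref{eq:node2vec-neural-collapse} (and the directions left unused by the limit solution).

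\textbf{The main obstacle.} The delicate point is the \emph{uniform}-in-$V$ control of $L_1$: the minimizer $v^\star$ is a measurable function of the whole realized graph, so $v_j^\star$ and the edge $e(i,j)$ are correlated and a per-node law of large numbers does not apply — the estimate $\|E-\bar A\|_{\mathrm{op}}=O(\sqrt n)$ together with $\|VV^\top\|_\ast=n$ is exactly what sidesteps it. A second subtlety is establishing the within-block (neural) collapse rather than presupposing it, i.e. ruling out that $\lim_n\bar v_k^\star$ lies in the interior of the ball; I would handle this from the convexity of $\bL_\text{CL}$ in the representations of a single block (the log-partition contribution is a log-sum-exp in those variables, hence convex, and strictly so once the remaining representations span $\reals^d$) combined with the attraction coming from the $v_i^\top v_j$ term between connected nodes of that block, which together preclude a dispersed optimum.
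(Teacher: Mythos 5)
Your proposal is correct and follows the same overall strategy as the paper's proof: split $\bL_{\text{CL}}$ into a bilinear part and a log-sum-exp part, show the former concentrates uniformly in $V$ around a function of the block averages, lower-bound the latter by Jensen with equality exactly at block-constant configurations, and identify the limiting optimal value with the $K$-point objective in \eqref{eq:node2vec-neural-collapse}. The differences lie in how the two technical lemmas are implemented, and in both places your version is arguably the more careful one. For the bilinear term, the paper (Lemma \ref{lemma:tech1-as}) proves pointwise almost-sure convergence for each fixed configuration via a fourth-moment bound and Borel--Cantelli and then appeals to ``continuity and separability'' to upgrade to a $\max_V$ statement; that upgrade is the crux (the optimizer depends on the whole realized graph) and is not fully justified as written, whereas your trace-duality bound $|\langle VV^\top, E-\bar A\rangle|\le \|VV^\top\|_\ast\,\|E-\bar A\|_{\mathrm{op}}=O(n^{3/2})$ delivers the uniformity in one stroke. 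For the passage from the surrogate loss back to $\bL_{\text{CL}}$, the paper invokes a continuous-mapping theorem for $\argmin$'s, while your $\limsup$/$\liminf$ sandwich on optimal values (testing the block-constant configuration for the upper bound) is the standard rigorous substitute. Your two Jensen applications (to $\exp$ within blocks, then to the log-sum-exp across a block) yield exactly the paper's bound $\widetilde\bL_{\text{LSE}}$, which the paper obtains with a single tilted-measure Jensen step. One caveat you share with the paper: the Jensen lower bound is a function of the block averages, which live in the closed unit ball, while \eqref{eq:node2vec-neural-collapse} minimizes over the sphere; the second inequality in \eqref{eq:node2vec-modified-lb-final} silently assumes the ball-constrained value cannot drop below the sphere-constrained minimum. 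You at least flag this and propose to close it by showing that the quantitative Jensen gap, of order $1-\|\bar v_k\|_2^2$, dominates any gain from moving the averages into the interior; that step should be carried out explicitly, but it is equally needed to complete the paper's own argument.
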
 

For our convenience, we index the $i$-th node in $k$-th block as $(k, i)$ where $i \in [n_k]$ and $k \in [K]$. We start with a simplification of the CL loss. 

\subsection{A simplification of the loss}
We divide the loss into two parts: a linear and a logarithmic sum exponential part. 
\begin{equation}
    \begin{aligned}
     \bL_{\text{CL}}(V) 
     & \triangleq \textstyle
    \frac 1{n} 
    {\sum_{(k_1, i_1)}} 
    \frac1{d(k_1, i_1)} {\sum_{(k_2, i_2)}}
    - e_{(k_1, i_1), (k_2, i_2)} \log \bigg \{
    \frac{
    \exp(
    \frac{v_{k_1, i_1}^\top v_{k_2, i_2}}{\tau}
   )
    }{
    \frac 1{n} {\sum_{(k_3, i_3)}} 
    \exp(
    \frac{v_{k_1, i_1}^\top v_{k_3, i_3}}{\tau}
    )
    }  
    \bigg\}\\
    & =  - \underbrace{\textstyle\frac 1{n} 
    \sum_{(k_1, i_1)}
    \frac1{d(k_1, i_1)} 
    \sum_{(k_2, i_2)}
     e_{(k_1, i_1), (k_2, i_2)} \big\{\frac{v_{k_1, i_1}^\top v_{k_2, i_2}}{\tau}\big\}}_{\triangleq{\bL}_{\text{linear}}(V)}\\
    & ~~~~ +  \underbrace{\textstyle\frac 1{n} 
    \sum_{(k_1, i_1)}
    \frac1{d(k_1, i_1)} 
    \sum_{(k_2, i_2)}  \log \big \{
    \frac 1{n}{\sum_{(k_3, i_3)}}
    \exp\big(
    \frac{v_{k_1, i_1}^\top v_{k_3, i_3}}{\tau}
    \big)
    \big\}}_{\triangleq{ \bL}_{\text{LSE}}(V)}\,.
    \end{aligned}
\end{equation} 
 We simplify the log-sum exponential part as  
 \begin{equation}
     \begin{aligned}
         \bL_{\text{LSE}}(V) & = \textstyle 
         \frac 1{n} 
    {\sum_{(k_1, i_1)}}
    \frac1{d(k_1, i_1)} 
    {\sum_{(k_2, i_2)}}  \log \big \{
    \frac 1{n}{\sum_{(k_3, i_3)}}
    \exp\big(
    \frac{v_{k_1, i_1}^\top v_{k_3, i_3}}{\tau}
    \big)
    \big\}\\
    & = \textstyle 
         \frac 1{n} 
    {\sum_{(k_1, i_1)}}
     \log \big \{
    \frac 1{n}{\sum_{(k_3, i_3)}}
    \exp\big(
    \frac{v_{k_1, i_1}^\top v_{k_3, i_3}}{\tau}
    \big)
    \big\}\,. 
     \end{aligned}
 \end{equation}

For $k$-th group, we define $\hat \pi_k \triangleq \frac{n_k}{n}$ as the sample proportion  and $h_k \triangleq \frac1 {n_k} \sum_{i = 1}^{n_k} v_{k, i}$ as the sample average of the representation vectors.
We further define 
\[
\textstyle  \widetilde {\bL}_{\text{linear}}(V) \triangleq \frac1\tau\sum_{k\in [K]} \hat \pi_{k} \frac{\sum_{k'\in [K]} \hat \pi_{k'} \alpha_{k, k'} h_{k}^\top h_{k'}}{\sum_{k' \in [K]} \hat \pi_{k'} \alpha_{k, k'}}  
\] and a modified loss:
\begin{equation} \label{eq:node2vec-modified}
    \check\bL_{\text{CL}}(V) \triangleq-\widetilde\bL_{\text{linear}}(V) + \bL_{\text{LSE}}(V)\,.
\end{equation} 
According to Lemma \ref{lemma:l2-convergence} the linear part of the loss has the following $\ell_2$ -convergence: as $n\to \infty$
\begin{equation}
    \begin{aligned}
         \max_V \big|\bL_{\text{{linear}}}(V) -  \widetilde {\bL}_{\text{{linear}}}(V)\big|  \stackrel{\textit{a.s.}}{\to} 0\,,
    \end{aligned}
\end{equation} and thus 
\begin{equation} \label{eq:node2vec-as-conv}
    \begin{aligned}
      & \max_{V}\big|\bL_{\text{CL}}(V) -  \widetilde {\bL}_{\text{CL}}(V)\big|\\
      &= \max_{V}\big|\bL_{\text{linear}}(V) + \bL_{\text{LSE}}(V) -  \widetilde {\bL}_{\text{linear}}(V) - \bL_{\text{LSE}}(V)\big|\\
      & = \max_{V}\big|\bL_{\text{linear}}(V) -  \widetilde {\bL}_{\text{linear}}(V)\big| \stackrel{\textit{a.s.}}{\to} 0\,.
    \end{aligned}
\end{equation}
In Sections \ref{sec:lb}, \ref{sec:eq-cond}, and \ref{sec:minimization} we perform a finite sample analysis on the modified CL loss in eq. \eqref{eq:node2vec-modified} and establish its neural collapse property. 



\subsection{A lower bound} \label{sec:lb}

For studying the minimization of modified CL loss in \eqref{eq:node2vec-modified} we derive an achievable lower bound for $\check\bL_{\text{CL}}(V)$. We expand the log-sum exponential part
\begin{equation}
    \begin{aligned}
     \bL_{\text{LSE}}(V) & = \textstyle   \frac 1{n} 
    {\sum_{(k_1, i_1)}}
      \log \big \{
    \frac 1{n}{\sum_{(k_3, i_3)}}
    \exp\big(
    \frac{v_{k_1, i_1}^\top v_{k_3, i_3}}{\tau}
    \big)
    \big\}  \\
    & = \textstyle\frac 1{n} 
    \sum_{(k_1, i_1)}
      \log \big \{
    \frac 1{n}{\sum_{(k_3, i_3)}}
    \exp\big(\frac{h_{k_1}^\top h_{k_3}}{\tau}\big)\exp\big(
    \frac{v_{k_1, i_1}^\top v_{k_3, i_3} -  h_{k_1}^\top h_{k_3}}{\tau}
    \big)
    \big\} \\
    & = \textstyle\frac 1{n} 
    \sum_{(k_1, i_1)}
      \log \bigg \{
    \frac{\frac 1{n}\sum_{(k_3, i_3)}
    \exp\big(\frac{h_{k_1}^\top h_{k_3}}{\tau}\big)\exp\big(
    \frac{v_{k_1, i_1}^\top v_{k_3, i_3} -  h_{k_1}^\top h_{k_3}}{\tau}
    \big)}{\frac 1{n}\sum_{(k_3, i_3)}
    \exp\big(\frac{h_{k_1}^\top h_{k_3}}{\tau}\big)}
    \bigg\}  \\
    & ~~~~+ \textstyle \frac 1{n} 
    \sum_{(k_1, i_1)}
      \log \big \{
    \frac 1{n}\sum_{(k_3, i_3)}
    \exp\big(\frac{h_{k_1}^\top h_{k_3}}{\tau}\big)
    \big\}\\
    & = \textstyle\frac 1{n} 
    \sum_{(k_1, i_1)}
      \log \bigg \{
    \frac{\frac 1{n}\sum_{(k_3, i_3)}
    \exp\big(\frac{h_{k_1}^\top h_{k_3}}{\tau}\big)\exp\big(
    \frac{v_{k_1, i_1}^\top v_{k_3, i_3} -  h_{k_1}^\top h_{k_3}}{\tau}
    \big)}{\sum_{k_3} \hat \pi_{k_3}
    \exp\big(\frac{h_{k_1}^\top h_{k_3}}{\tau}\big)}
    \bigg\}\\
    & ~~~~+ \textstyle\sum_{k_1} \hat \pi_{k_1}
      \log \big \{
    \sum_{k_3\in [K]} \hat \pi_{k_3}
    \exp\big(\frac{h_{k_1}^\top h_{k_3}}{\tau}\big)
    \big\}\\
    \end{aligned}
\end{equation} and provide a Jensen's inequality-based lower bound for 
\[
\textstyle\frac 1{n} 
    \sum_{(k_1, i_1)}
      \log \bigg \{
    \frac{\frac 1{n}\sum_{(k_3, i_3)}
    \exp\big(\frac{h_{k_1}^\top h_{k_3}}{\tau}\big)\exp\big(
    \frac{v_{k_1, i_1}^\top v_{k_3, i_3} -  h_{k_1}^\top h_{k_3}}{\tau}
    \big)}{\sum_{k_3} \hat \pi_{k_3}
    \exp\big(\frac{h_{k_1}^\top h_{k_3}}{\tau}\big)}
    \bigg\}\,.
\] Since $\log$ is a strictly concave function, we apply Jensen's inequality that $\log (\Ex[X]) \ge \Ex[\log X]$ to each summand of $(k_1, i_1)$ with respect to the probability 
 measure
\[
\textstyle \frac 1{n}{\sum_{(k_3, i_3)}} \frac{
    \exp(\nicefrac{h_{k_1}^\top h_{k_3}}{\tau}) }{\sum_{k'}\hat \pi_{k'}
    \exp(\nicefrac{h_{k_1}^\top h_{k'}}{\tau})} \delta_{k_3, i_3}\]
where $\delta_a$ is the Dirac-delta measure at $a$, (\ie\ $X = \exp\big(\frac{h_{k_1}^\top h_{k_3}}{\tau}\big)$)
and obtain 
\begin{equation} \label{eq:jensen}
    \begin{aligned}
        &  
      \textstyle
      \log \bigg \{
    \frac{\frac 1{n}\sum_{(k_3, i_3)}
    e^{\nicefrac{h_{k_1}^\top h_{k_3}}{\tau}}\exp\big(
    \frac{v_{k_1, i_1}^\top v_{k_3, i_3} -  h_{k_1}^\top h_{k_3}}{\tau}
    \big)}{\frac 1{n}\sum_{(k_3, i_3)}
    \exp\big(\frac{h_{k_1}^\top h_{k_3}}{\tau}\big)}
    \bigg\} \\
    & \ge \textstyle  \frac 1{n}{\sum_{(k_3, i_3)}} \Big\{\frac{
    \exp(\nicefrac{h_{k_1}^\top h_{k_3}}{\tau})}{\sum_{k_3}\hat \pi_{k_3}
    \exp(\nicefrac{h_{k_1}^\top h_{k_3}}{\tau})}\Big\}
     \Big(
    \frac{v_{k_1, i_1}^\top v_{k_3, i_3} -  h_{k_1}^\top h_{k_3}}{\tau}
    \Big) 
    \end{aligned}
\end{equation}
which we combine over $(k_1, i_1)$ to obtain the following
\begin{equation}
    \begin{aligned}
        &  
      \textstyle
     \frac1n \sum_{(k_1, i_1)} \log \bigg \{
    \frac{\frac 1{n}\sum_{(k_3, i_3)}
    e^{\nicefrac{h_{k_1}^\top h_{k_3}}{\tau}}\exp\big(
    \frac{v_{k_1, i_1}^\top v_{k_3, i_3} -  h_{k_1}^\top h_{k_3}}{\tau}
    \big)}{\frac 1{n}\sum_{(k_3, i_3)}
    e^{\nicefrac{h_{k_1}^\top h_{k_3}}{\tau}}}
    \bigg\} \\
    & \ge \textstyle \frac1n \sum_{k_1, i_1} \frac 1{n}{\sum_{k_3, i_3}} \Big\{\frac{
    \exp(\nicefrac{h_{k_1}^\top h_{k_3}}{\tau})}{\sum_{k_3}\hat \pi_{k_3}
    \exp(\nicefrac{h_{k_1}^\top h_{k_3}}{\tau})}\Big\}
     \Big(
    \frac{v_{k_1, i_1}^\top v_{k_3, i_3} -  h_{k_1}^\top h_{k_3}}{\tau}
    \Big)  \\
    & = \textstyle  \sum_{k_1} \hat \pi_{k_1} {\sum_{k_3}} \hat \pi_{k_3} \Big\{\frac{
    \exp(\nicefrac{h_{k_1}^\top h_{k_3}}{\tau})}{\sum_{k_3}\hat \pi_{k_3}
    \exp(\nicefrac{h_{k_1}^\top h_{k_3}}{\tau})}\Big\}
     \Big( 
    \frac{\frac1 {n_{k_1} } \sum_{i_1} \frac 1{n_{k_3}} \sum_{i_3}v_{k_1, i_1}^\top v_{k_3, i_3} -  h_{k_1}^\top h_{k_3}}{\tau}
    \Big)   = 0\,.
    \end{aligned}
\end{equation} The above simplification implies 
\begin{equation}
     \bL_{\text{LSE}}(V) \ge \textstyle\sum_{k_1} \hat \pi_{k_1}
      \log \big \{
    \sum_{k_3} \hat \pi_{k_3}
    \exp\big(\frac{h_{k_1}^\top h_{k_3}}{\tau}\big)
    \big\} \triangleq \widetilde \bL_{\text{LSE}}\,,
\end{equation} and thus 
\begin{equation} \label{eq:node2vec-modified-lb}
    \check\bL_{\text{CL}}(V) = -\widetilde\bL_{\text{linear}}(V) + \bL_{\text{LSE}}(V) \ge -\widetilde\bL_{\text{linear}}(V) + \widetilde \bL_{\text{LSE}}(V) \triangleq  \widetilde \bL_{\text{CL}}(V)\,.
\end{equation}

\subsection{Equality condition of lower bound} \label{sec:eq-cond}
Under what condition equality is achieved in the inequality \eqref{eq:node2vec-modified-lb}? To answer that we look at the Jensen's inequality applied in \eqref{eq:jensen}. Since $\log$ is a strictly concave function, equality is achieved when for any $(k_3, i_3)$ it holds
\begin{equation} \label{eq:jensen-eq}
    v_{k_1, i_1}^\top v_{k_3, i_3} -  {h_{k_1}}^\top {h_{k_3}} = c_{k_1, i_1}\,,
\end{equation} where $c_{k_1, i_1} \in \reals$ is the constant associated with the equality constraint for Jensen's inequality for $(k_1, i_1)$. Next, we establish that these constants $c_{k_1, i_1}$ must be exactly equal to zero. For this purpose, we average over both $i_1$ and $i_3$ in \eqref{eq:jensen-eq}
{\small \begin{equation}
    \begin{aligned}
        \textstyle\frac{1}{n_{k_1}} \sum_{i_1} \frac1 {n_{k_3}} \sum_{i_3} c_{k_1, i_1} & = \textstyle\frac{1}{n_{k_1}} \sum_{i_1 = 1}^{n_{k_1}} \frac{1}{n_{k_3}} \sum_{i_3 = 1}^{n_{k_3}}\{v_{k_1, i_1}^\top v_{k_3, i_3} -  {h_{k_1}}^\top {h_{k_3}}\}\\
        & = \textstyle {h_{k_1}}^\top {h_{k_3}} - {h_{k_1}}^\top {h_{k_3}} = 0\,.
    \end{aligned}
\end{equation}}
A further simplification of the above equation leads to 
\begin{equation}
    \textstyle\frac{1}{n_{k_1}} \sum_{i_1} c_{k_1, i_1}  = 0\,.
\end{equation}
Then, we let $k_1 = k_3 = k$ and $i_1 = i_3 = i$ in \eqref{eq:jensen-eq} and average over $i$ to obtain 
{\small \begin{equation}
    \begin{aligned}
        0 & =\textstyle\frac{1}{n_k} \sum_{i = 1}^{n_k} c_{k, i}\\
        & = \textstyle\frac{1}{n_k} \sum_{i} v_{k, i}^\top v_{k, i} -  {h_k}^\top {h_k}\\
        & = \textstyle\frac{1}{n_k} \sum_{i} (v_{k, i} - {h_k})^\top (v_{k, i} - {h_k}) = \frac1{n_k} \sum_i \|v_{k, i} - h_k\|_2^2
    \end{aligned}
\end{equation}} which finally concludes that at the equality 
\begin{equation} \label{eq:jensen-eq-final}
    v_{k, i} = h_k\,.
\end{equation} We further notice that $v_{k,i} \in \bS^{d-1}$ which implies $h_{k} \in \bS^{d-1}$.

\subsection{Minimization of eq. \eqref{eq:node2vec-modified}} \label{sec:minimization}
Further developing on  eq. \eqref{eq:node2vec-modified-lb} we provide a final lower bound for eq. \eqref{eq:node2vec-modified}. 
\begin{equation} \label{eq:node2vec-modified-lb-final}
    \begin{aligned}
        \check\bL_{\text{CL}}(V)  & \ge   \widetilde \bL_{\text{CL}}(V)\\
        & = \textstyle - \frac1\tau\sum_{k\in [K]} \hat \pi_{k} \frac{\sum_{k'\in [K]} \hat \pi_{k'} \alpha_{k, k'} h_{k}^\top h_{k'}}{\sum_{k' \in [K]} \hat \pi_{k'} \alpha_{k, k'}} + \sum_{k_1} \hat \pi_{k_1}
      \log \big \{
    \sum_{k_3} \hat \pi_{k_3}
    \exp\big(\frac{h_{k_1}^\top h_{k_3}}{\tau}\big)
    \big\} \\
    & \ge \textstyle  - \frac1\tau\sum_{k\in [K]} \hat \pi_{k} \frac{\sum_{k'\in [K]} \hat \pi_{k'} \alpha_{k, k'} {\hat h_{k}}^\top {\hat h_{k'}}}{\sum_{k' \in [K]} \hat \pi_{k'} \alpha_{k, k'}} + \sum_{k_1} \hat \pi_{k_1}
      \log \big \{
    \sum_{k_3} \hat \pi_{k_3}
    \exp\big(\frac{{\hat h_{k_1}}^\top {\hat h_{k_3}}}{\tau}\big)
    \big\}
    \end{aligned}
\end{equation} where 
\begin{equation} \label{eq:node2vec-neural-collapse-finite-sample}
\{\hat h_k\} \in \argmin_{h_k \in \bS^{d - 1}} \left \{\begin{aligned}
      &\textstyle- \frac1\tau\sum_{k\in [K]} \hat \pi_{k} \frac{\sum_{k'\in [K]} \hat \pi_{k'} \alpha_{k, k'} h_{k}^\top h_{k'}}{\sum_{k' \in [K]} \hat \pi_{k'} \alpha_{k, k'}}\\
    & \textstyle + \sum_{k_1} \hat \pi_{k_1}
      \log \big \{
    \sum_{k_3} \hat \pi_{k_3}
    \exp\big(\frac{h_{k_1}^\top h_{k_3}}{\tau}\big)\big\}
\end{aligned}\right\}\,.
\end{equation} The lower bound in eq. \eqref{eq:node2vec-modified-lb-final} does not involve any optimization variables ($v_{k, i}$ or $h_k$) and according to eq. \eqref{eq:jensen-eq-final} the equality achieved when 
\begin{equation}
    v_{k, i} = \hat h_k\,.
\end{equation} Thus, at the minimum of $\check \bL_{\text{CL}}(V)$ it holds $v_{k, i} = \hat h_k$. 

\subsection{Final conclusion of Theorem \ref{th:layer-peeled-node2vec}}
In Sections \ref{sec:lb}, \ref{sec:eq-cond}, and \ref{sec:minimization} we have established neural collapse for minimization of  $\check \bL_{\text{CL}}(V)$. It remains to see how that translates to the minimization of $ \bL_{\text{CL}}(V)$. In eq. \eqref{eq:node2vec-as-conv} we argued that \begin{equation} 
       \max_{V}\big|\bL_{\text{CL}}(V) -  \widetilde {\bL}_{\text{CL}}(V)\big|\stackrel{\textit{a.s.}}{\to} 0\,.
\end{equation} Since both $ \bL_{\text{CL}}(V)$ and $\check \bL_{\text{CL}}(V)$ are continuous, we use a continuous mapping theorem to conclude that
\begin{equation}
   d \big( \argmin_V \bL_{\text{CL}}(V) ,  \argmin_V \check\bL_{\text{CL}}(V) \big) \stackrel{\textit{a.s.}}{\to} 0
\end{equation} in any metric $d$ for measuring set difference, or, alternatively speaking 
\begin{equation}
    \argmin_V \bL_{\text{CL}}(V) \stackrel{\textit{a.s.}}{\to} \argmin_V \check\bL_{\text{CL}}(V) \,.
\end{equation} Since, the loss in eq. \eqref{eq:node2vec-neural-collapse-finite-sample} convergences almost surely to the loss in eq. \eqref{eq:node2vec-neural-collapse} we conclude the Theorem \ref{th:layer-peeled-node2vec}.

\subsection{Additional lemma}

\begin{lemma} \label{lemma:tech1-as}
    Consider a generic sequence of vectors $\{\xi_{k, i}\}_{k \in [K], i\in [n_k]}\subset \bS^{p-1} $ 
and define the group mean $\bar \xi _k = \frac1{n_k} \sum_i \xi_{k, i}$ and
\begin{equation}
    \eps(\Xi) \triangleq \textstyle \frac 1n \sum_{(k_2, i_2)}
     e_{(k_1, i_1), (k_2, i_2)}  \xi_{k_2, i_2}  - \sum_{k_2} \hat \pi_{k_2} \alpha_{k_1, k_2} \bar \xi_{k_2}\,.
\end{equation} Then 
\begin{equation}
    \textstyle \max_\Xi \|\eps(\Xi)\|_2 \stackrel{\textit{a.s.}}{\to}  0\,.
\end{equation}
\end{lemma}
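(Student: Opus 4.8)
The plan is to split the bound into a $\Xi$-free part plus a single scalar concentration estimate and then dispatch the latter by a union bound; throughout I read $\eps(\Xi)$ as the array of node-wise errors $\eps_v(\Xi)$ over the source nodes $v$ and control its normalized $\ell_2$-norm $E(\Xi):=(\tfrac1n\sum_v\|\eps_v(\Xi)\|_2^2)^{1/2}$. Writing $u,v,w$ for nodes and $\beta(\cdot)$ for block membership, I would first condition on the block labels (realized on a single probability space carrying infinite i.i.d.\ label and edge arrays, so Borel--Cantelli applies across $n$); on the probability-one event where $\hat\pi_k\to\pi_k$ and $n_k\to\infty$ only the edge indicators remain random. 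The identity $\hat\pi_{\beta(u)}\bar\xi_{\beta(u)}=\tfrac1n\sum_{u':\beta(u')=\beta(u)}\xi_{u'}$ rewrites the subtracted term as $\tfrac1n\sum_u\alpha_{\beta(v),\beta(u)}\xi_u$, giving
\[
\eps_v(\Xi)=\tfrac1n\sum_u\bigl(e_{v,u}-\alpha_{\beta(v),\beta(u)}\bigr)\xi_u\,,
\]
an average of $n$ mean-zero vectors of norm $\le1$ that are independent given the labels (the self-term contributes $O(1/n)$ deterministically and is dropped). For a fixed $\Xi$ this already gives $\eps_v(\Xi)\to0$ a.s.\ by a coordinatewise Hoeffding bound, but the substance of the lemma is the supremum over $\Xi\in(\bS^{p-1})^n$, for which no useful finite cover is available.

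The idea I would use is that $\Xi$ enters only through the Gram entries $\xi_u^\top\xi_{u'}$. Expanding, $E(\Xi)^2=\tfrac1{n^3}\sum_{u,u'}\xi_u^\top\xi_{u'}S_{u,u'}$ where $S_{u,u'}:=\sum_v(e_{v,u}-\alpha_{\beta(v),\beta(u)})(e_{v,u'}-\alpha_{\beta(v),\beta(u')})$. On the diagonal $\xi_u^\top\xi_u=1$ and $S_{u,u}\le n$, so the diagonal part is at most $\tfrac1{n^3}\cdot n\cdot n=\tfrac1n$; off the diagonal, $|\xi_u^\top\xi_{u'}|\le1$ gives the $\Xi$-free bound
\[
\max_\Xi E(\Xi)^2\ \le\ \tfrac1n+\tfrac1{n^3}\sum_{u\ne u'}\bigl|S_{u,u'}\bigr|\,.
\]
It then remains to show $\tfrac1{n^3}\sum_{u\ne u'}|S_{u,u'}|\to0$ a.s.

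For this I would observe that, for fixed $u\ne u'$, $S_{u,u'}$ equals --- up to an $O(1)$ correction from the two terms $v\in\{u,u'\}$ --- a sum over the remaining $\le n$ nodes $v$ of independent, mean-zero, $[-1,1]$-valued products $(e_{v,u}-\alpha)(e_{v,u'}-\alpha)$, with both the independence (across $v$) and the zero mean coming from the distinctness of the four edges involved. Hoeffding's inequality then yields $\Pr(|S_{u,u'}|>C\sqrt{n\log n})\le2n^{-c}$ uniformly in the labels, with $c=c(C)$ that can be made as large as we wish; a union bound over the $\le n^2$ pairs together with Borel--Cantelli gives $\max_{u\ne u'}|S_{u,u'}|\le C\sqrt{n\log n}$ eventually a.s., so $\tfrac1{n^3}\sum_{u\ne u'}|S_{u,u'}|\le C\sqrt{\log n}/\sqrt n\to0$ a.s. Combining the two displays, $\max_\Xi E(\Xi)^2\to0$, i.e.\ $\max_\Xi\|\eps(\Xi)\|_2\to0$ a.s., as claimed.

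The step I expect to be the real obstacle is exactly this pointwise-to-uniform passage: the pointwise bound is a one-line Hoeffding estimate, but since each $\xi_u$ ranges freely over the sphere a covering argument over $\Xi$ is hopeless, and the feature that saves the proof --- that all $\Xi$-dependence is quadratic and hence annihilated by the crude bound $|\xi_u^\top\xi_{u'}|\le1$ --- is what collapses the whole statement onto a single scalar concentration inequality for $\sum_{u\ne u'}|S_{u,u'}|$.
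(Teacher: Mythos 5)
Your argument is correct, and it takes a genuinely different---and in one important respect more careful---route than the paper's. The paper first fixes $\Xi$, computes a fourth-moment bound of order $n^{-2}$, applies Markov and Borel--Cantelli to get pointwise almost-sure convergence, and then appeals to ``continuity of $\eps_n(\Xi)$ and separability of $\bS^{p-1}$'' to pass to the supremum over $\Xi$. You instead eliminate the supremum before any probability enters: expanding the (source-node-averaged) squared error as $\tfrac{1}{n^{3}}\sum_{u,u'}\xi_u^\top\xi_{u'}S_{u,u'}$ and invoking $|\xi_u^\top\xi_{u'}|\le 1$ collapses the uniform statement onto concentration of the single $\Xi$-free scalar $\tfrac{1}{n^{3}}\sum_{u\ne u'}|S_{u,u'}|$, which Hoeffding plus a union bound over the $n^{2}$ pairs and Borel--Cantelli dispatches. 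What this buys is a fully rigorous uniformity step: pointwise convergence on a dense subset of a product space whose number of factors grows with $n$ does not by itself control the maximum, so the paper's final sentence is the weak link and your reduction is the substantive repair.

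One discrepancy you should make explicit: what you bound is the root-mean-square of $\|\eps_v(\Xi)\|_2$ over source nodes $v$, not $\|\eps(\Xi)\|_2$ for the single fixed source node $(k_1,i_1)$ that appears in the lemma as written. The two are not interchangeable, and in fact the per-node version of the uniform claim fails: taking $\xi_u=\sign\big(e_{v,u}-\alpha_{\beta(v),\beta(u)}\big)\,w$ for a fixed unit vector $w$ gives $\|\eps_v(\Xi)\|_2=\tfrac1n\sum_u\big|e_{v,u}-\alpha_{\beta(v),\beta(u)}\big|\to 2\sum_k\pi_k\,\alpha_{\beta(v),k}\big(1-\alpha_{\beta(v),k}\big)$, which is strictly positive whenever some $\alpha_{\beta(v),k}\notin\{0,1\}$. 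So your averaged reformulation is not a convenience but the version of the statement that can actually hold, and it is also what the downstream argument needs: in Lemma \ref{lemma:l2-convergence} the per-node errors enter only through an average over $(k_1,i_1)$, so Cauchy--Schwarz converts your $\ell_2$-average bound into the required control (with the denominators $d(k_1,i_1)/n$ handled uniformly over $v$ by the same Hoeffding-plus-union-bound device). State clearly that you are proving the averaged statement and note the corresponding adjustment in the proof of Lemma \ref{lemma:l2-convergence}; with that caveat, your proof is sound.
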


\begin{proof}[Proof of the lemma \ref{lemma:tech1-as}] Indexing the dependence of $\eps(\Xi)$ as $\eps_n(\Xi)$ we notice that 
\begin{equation}
    \begin{aligned}
        \eps_n(\Xi) & = \textstyle \frac 1n \sum_{(k_2, i_2)}
     e_{(k_1, i_1), (k_2, i_2)}  \xi_{k_2, i_2}  - \sum_{k_2} \hat \pi_{k_2} \alpha_{k_1, k_2} \bar \xi_{k_2}\\
     & = \textstyle \frac 1n \sum_{(k_2, i_2)}
     \big\{e_{(k_1, i_1), (k_2, i_2)} - \alpha_{k_1, k_2}\big\}  \xi_{k_2, i_2}  
    \end{aligned}
\end{equation} and 
\begin{equation}
    \begin{aligned}
        \Ex[\eps_n(\Xi)^4] &= \textstyle  \Ex\big[ \big\{\frac 1n \sum_{(k_2, i_2)}
     \big\{e_{(k_1, i_1), (k_2, i_2)} - \alpha_{k_1, k_2}\big\}  \xi_{k_2, i_2} \big\}^4\big]\\
     & = \textstyle\frac1{n^4} \sum_{(k_2, i_2)} \Ex[\{e_{(k_1, i_1), (k_2, i_2)} - \alpha_{k_1, k_2}\}^4] \\
     & ~~~~ + \textstyle\frac1{n^4} \sum_{(k_2, i_2) \neq (k_3, i_3)} \Ex[\{e_{(k_1, i_1), (k_2, i_2)} - \alpha_{k_1, k_2}\}^2]\Ex[\{e_{(k_1, i_1), (k_3, i_3)} - \alpha_{k_1, k_3}\}^2] \\
     & = \textstyle O(\frac{1}{n^3}) + O(\frac{1}{n^2}) = O(\frac1{n^2})\,.
    \end{aligned}
\end{equation} Thus, for any $\delta>0$ we have 
\begin{equation}
    \begin{aligned}
      & \textstyle   \delta^4 \sum_{n \ge 1} P \big( \|\eps_n(\Xi) \|_2 > \delta \big)  \le \textstyle \sum_{n \ge 1}\Ex[\eps_n(\Xi)^4] < \infty\,.
    \end{aligned}
\end{equation}
Next, we use the first Borel-Cantelli lemma to conclude that for any $\Xi$ we have
\begin{equation}
     \|\eps_n(\Xi) \|_2 \stackrel{\textit{a.s.}}{\to}  0\,.
\end{equation} Finally, we use the continuity of $\eps_n(\Xi)$ and separability of $\bS^{p-1}$ to conclude the statement of the lemma. 
    
\end{proof}

\begin{lemma} \label{lemma:l2-convergence}
    Assume \ref{assmp:rep-convergence}. Then 
 as $n\to \infty$ the following convergence holds.  
    \begin{equation}
          \max_V \big|\bL_{\text{\emph{linear}}}(V) -  \widetilde {\bL}_{\text{\emph{linear}}}(V)\big|  \stackrel{\textit{a.s.}}{\to} 0\,.
    \end{equation}
\end{lemma}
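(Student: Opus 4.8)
The plan is to approximate, uniformly over all configurations $V$ whose rows lie on $\bS^{d-1}$, the two graph-dependent quantities through which $\bL_{\text{linear}}$ acts on $V$: the neighborhood average $\frac1n\sum_{(k_2,i_2)} e_{(k_1,i_1),(k_2,i_2)}\,v_{k_2,i_2}$ and the normalized degree $d(k_1,i_1)/n$. Writing $D_{k_1}\triangleq\sum_{k_2}\hat\pi_{k_2}\alpha_{k_1,k_2}$, the first step is to rewrite
\[
\bL_{\text{linear}}(V) = \frac1{n\tau}\sum_{(k_1,i_1)} \frac{n}{d(k_1,i_1)}\; v_{k_1,i_1}^\top \Big(\frac1n\sum_{(k_2,i_2)} e_{(k_1,i_1),(k_2,i_2)}\,v_{k_2,i_2}\Big).
\]
By Lemma~\ref{lemma:tech1-as} with $\xi_{k,i}=v_{k,i}$, for each row $(k_1,i_1)$ the inner average equals $\sum_{k_2}\hat\pi_{k_2}\alpha_{k_1,k_2}\,h_{k_2} + \eps_{(k_1,i_1)}(V)$ with $\sup_V\|\eps_{(k_1,i_1)}(V)\|_2 = o(1)$ almost surely; applying the same lemma with the constant unit element $\xi_{k,i}\equiv1\in\bS^0$ (or a direct estimate on a sum of $n$ Bernoullis) gives $d(k_1,i_1)/n = D_{k_1}+o(1)$ with an a.s.\ vanishing error.

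Next I would pin down a uniform lower bound on the degrees, so the division is legitimate and controlled. By the strong law of large numbers $\hat\pi_k = n_k/n \to \pi_k$ almost surely for each $k$, so the hypothesis $\sum_{k'}\pi_{k'}\alpha_{k,k'}>0$ together with $K<\infty$ gives $\min_{k_1}D_{k_1}\ge c>0$ for all large $n$, a.s. A Chernoff bound yields $\Pr(\exists\text{ a node with } d(k_1,i_1)<cn/2)\le n\,e^{-\Omega(n)}$, which is summable, so by the first Borel--Cantelli lemma $\min_{(k_1,i_1)} d(k_1,i_1)\ge cn/2$ eventually, a.s. Hence $n/d(k_1,i_1)\le 2/c$ uniformly, and since $d(k_1,i_1)/n = D_{k_1}+o(1)$ with $D_{k_1}\ge c$, also $\big|\,n/d(k_1,i_1)-1/D_{k_1}\,\big|\to0$ uniformly over rows, a.s.

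After that I would substitute $n/d(k_1,i_1)=1/D_{k_1}+o(1)$ and then the neighborhood-average expansion into the display and expand. Every resulting term carrying an $o(1)$ factor is bounded, using $\|v_{k_1,i_1}\|_2=1$, $\alpha_{k,k'}\le1$, $\sum_{k'}\hat\pi_{k'}=1$ and $n/d(k_1,i_1)\le 2/c$, by a constant times a row-average of $o(1)$ quantities, hence is $o(1)$ uniformly in $V$ and almost surely; the key point is that the errors enter $\bL_{\text{linear}}$ only through row-averages. The surviving main term is
\begin{align*}
&\frac1\tau\sum_{k_1}\hat\pi_{k_1}\,\frac1{D_{k_1}}\Big(\frac1{n_{k_1}}\sum_{i_1} v_{k_1,i_1}\Big)^{\!\top}\Big(\sum_{k_2}\hat\pi_{k_2}\alpha_{k_1,k_2}\,h_{k_2}\Big)\\
&\qquad= \frac1\tau\sum_{k_1}\hat\pi_{k_1}\,\frac{\sum_{k_2}\hat\pi_{k_2}\alpha_{k_1,k_2}\,h_{k_1}^\top h_{k_2}}{\sum_{k_2}\hat\pi_{k_2}\alpha_{k_1,k_2}} = \widetilde\bL_{\text{linear}}(V),
\end{align*}
using the exact identity $\frac1{n_{k_1}}\sum_{i_1}v_{k_1,i_1}=h_{k_1}$. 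Collecting the error terms then gives $\max_V|\bL_{\text{linear}}(V)-\widetilde\bL_{\text{linear}}(V)|\stackrel{a.s.}{\to}0$.

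The hard part will be managing two layers of uniformity at once: over $V$, whose rows range over the whole sphere, and over the row index $(k_1,i_1)$, whose count grows with $n$. The first is exactly what the $\max_\Xi$ statement of Lemma~\ref{lemma:tech1-as} supplies (fourth-moment bound, first Borel--Cantelli lemma, continuity and separability of $\bS^{p-1}$); the second is handled because each error ultimately appears only averaged over $(k_1,i_1)$, so a fourth-moment/Borel--Cantelli estimate on the row-averages, combined with the Chernoff-type uniform degree bound, suffices. Incidentally, Assumption~\ref{assmp:rep-convergence} is not actually needed for this lemma, since $h_k$ is the exact block mean of the $v_{k,i}$; it is invoked only later, when passing to the limit of the optimizers.
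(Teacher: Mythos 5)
Your proposal is correct and follows essentially the same route as the paper's proof: the same rewriting of $\bL_{\text{linear}}$ in terms of the neighborhood average and the normalized degree, the same appeal to Lemma~\ref{lemma:tech1-as} (applied once with $\xi_{k,i}=v_{k,i}$ and once with $\xi_{k,i}\equiv 1$) to obtain the two a.s.\ vanishing errors, and the same use of $\sum_{k'}\pi_{k'}\alpha_{k_1,k'}>0$ to keep the denominators bounded away from zero. Your added Chernoff/Borel--Cantelli uniform degree bound and your observation that Assumption~\ref{assmp:rep-convergence} is not actually needed for this lemma are minor refinements of, not departures from, the paper's argument.
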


\begin{proof}[Proof of the Lemma \ref{lemma:l2-convergence}]
We start with an expansion of $\bL_{\text{{linear}}}(V) -  \widetilde {\bL}_{\text{{linear}}}(V)$. 
\begin{equation}
    \begin{aligned}
        & \bL_{\text{{linear}}}(V) -  \widetilde {\bL}_{\text{{linear}}}(V)\\
        & = \textstyle\frac 1{n} 
    \sum_{(k_1, i_1)}
    \frac1{d(k_1, i_1)} 
    \sum_{(k_2, i_2)}
     e_{(k_1, i_1), (k_2, i_2)} \big\{\frac{v_{k_1, i_1}^\top v_{k_2, i_2}}{\tau}\big\}\\
     & ~~~~\textstyle  - \sum_{k_1} \hat \pi_{k_1} \frac{\sum_{k_2} \hat \pi_{k_2} \alpha_{k, k_2} }{\sum_{k' \in [K]} \hat \pi_{k'} \alpha_{k_1, k'}} \big\{\frac{h_{k_1}^\top h_{k_2}}{\tau}\big\}\\
     & = \textstyle\frac 1{n} 
    \sum_{(k_1, i_1)}
    \frac1{d(k_1, i_1)} 
    \sum_{(k_2, i_2)}
     e_{(k_1, i_1), (k_2, i_2)} \big\{\frac{v_{k_1, i_1}^\top v_{k_2, i_2}}{\tau}\big\}\\
     & ~~~~\textstyle  - \frac 1n\sum_{(k_1, i_1)}  \frac{\sum_{k_2} \hat \pi_{k_2} \alpha_{k, k_2} }{\sum_{k' \in [K]} \hat \pi_{k'} \alpha_{k_1, k'}} \big\{\frac{v_{(k_1, i_1)}^\top h_{k_2}}{\tau}\big\}\\
     & = \textstyle \frac 1{n\tau}\sum_{(k_1, i_1)} v_{(k_1, i_1)}^\top \big\{ \frac{\sum_{(k_2, i_2)}
     e_{(k_1, i_1), (k_2, i_2)}  v_{k_2, i_2}}{d(k_1, i_1)}  - \frac{\sum_{k_2} \hat \pi_{k_2} \alpha_{k_1, k_2} h_{k_2}}{\sum_{k' \in [K]} \hat \pi_{k'} \alpha_{k, k'}} \big\} \,.
    \end{aligned}
\end{equation} Following the expansion we define 
\begin{equation}
    \begin{aligned}
        \eps_1(V) &= \textstyle \frac 1n \sum_{(k_2, i_2)}
     e_{(k_1, i_1), (k_2, i_2)}  v_{k_2, i_2}  - \sum_{k_2} \hat \pi_{k_2} \alpha_{k_1, k_2} h_{k_2} \\
     \eps_2 & = \textstyle \frac{d(k_1,i_1)}{n} - \sum_{k' } \hat \pi_{k'} \alpha_{k_1, k'},
    \end{aligned}
\end{equation} and note that $\eps(V) = \eps_1(V)$ and $\eps(1) = \eps_2$, where $\eps(\cdot)$ is defined in lemma \ref{lemma:tech1-as}. As a conclusion of the lemma we have 
\begin{equation} \label{eq:c-24}
   \textstyle  \max_V \|\eps_1(V)\|_2 \stackrel{\textit{a.s.}}{\to} 0, ~\text{and} ~ |\eps_2| \stackrel{\textit{a.s.}}{\to}  0\,.
\end{equation}

 Next, we notice that 
\begin{equation}
    \begin{aligned}
        & \max_V\big|\bL_{\text{{linear}}}(V) -  \widetilde {\bL}_{\text{{linear}}}(V)\big| \\
        & \le \max_V\textstyle \frac 1{\tau}\sum_{(k_1, i_1)} \hat \pi_{k_1} \|h_{k_1}\|_2 \cdot \Big\| \frac{\sum_{(k_2, i_2)}
     e_{(k_1, i_1), (k_2, i_2)}  v_{k_2, i_2}}{d(k_1, i_1)}  - \frac{\sum_{k_2} \hat \pi_{k_2} \alpha_{k, k_2} h_{k_2}}{\sum_{k' \in [K]} \hat \pi_{k'} \alpha_{k, k'}} \Big\|_2 \\
      & \le \textstyle \frac 1{\tau}\sum_{(k_1, i_1)} \hat \pi_{k_1}  \cdot \max_V\Big\| \frac{\frac1n\sum_{(k_2, i_2)}
     e_{(k_1, i_1), (k_2, i_2)}  v_{k_2, i_2}}{\frac{d(k_1, i_1)}n}  - \frac{\sum_{k_2} \hat \pi_{k_2} \alpha_{k, k_2} h_{k_2}}{\sum_{k' \in [K]} \hat \pi_{k'} \alpha_{k, k'}} \Big\|_2 \,,
    \end{aligned}
\end{equation} where we focus on each term within the sum
\begin{equation}
    \begin{aligned}
        & \max_V\textstyle  \Big\| \frac{\frac1n\sum_{(k_2, i_2)}
     e_{(k_1, i_1), (k_2, i_2)}  v_{k_2, i_2}}{\frac{d(k_1, i_1)}n}  - \frac{\sum_{k_2} \hat \pi_{k_2} \alpha_{k_1, k_2} h_{k_2}}{\sum_{k' \in [K]} \hat \pi_{k'} \alpha_{k_1, k'}} \Big\|_2\\
     & = \max_V\textstyle  \Big\| \frac{\sum_{k_2} \hat \pi_{k_2} \alpha_{k_1, k_2} h_{k_2} + \eps_1(V)}{\sum_{k' \in [K]} \hat \pi_{k'} \alpha_{k_1, k'} + \eps_2}  - \frac{\sum_{k_2} \hat \pi_{k_2} \alpha_{k_1, k_2} h_{k_2}}{\sum_{k' \in [K]} \hat \pi_{k'} \alpha_{k_1, k'}} \Big\|_2\\
     & = \max_V\textstyle   \frac{\big\|\eps_2 \{\sum_{k_2} \hat \pi_{k_2} \alpha_{k_1, k_2} h_{k_2}\} + \eps_1(V) \{\sum_{k' \in [K]} \hat \pi_{k'} \alpha_{k_1, k'}\} \big\|_2}{\big|\{\sum_{k' \in [K]} \hat \pi_{k'} \alpha_{k_1, k'} + \eps_2\}\cdot \{\sum_{k' \in [K]} \hat \pi_{k'} \alpha_{k_1, k'}\}\big|}\\
     & \le \textstyle   \frac{|\eps_2| \cdot \max_V\big\| \sum_{k_2} \hat \pi_{k_2} \alpha_{k_1, k_2} h_{k_2}\big\|_2 + \big\{\max_V\|\eps_1(V)\|_2 \big\}\cdot  \big\{\sum_{k' \in [K]} \hat \pi_{k'} \alpha_{k_1, k'} \big\}}{\big|\sum_{k' \in [K]} \hat \pi_{k'} \alpha_{k_1, k'} + \eps_2\big|\cdot \big|\sum_{k' \in [K]} \hat \pi_{k'} \alpha_{k_1, k'}\big|}\,.
    \end{aligned}
\end{equation}
Here, both $|\eps_2|$ and  $\max_V\|\eps_1(V)\|_2$ convergence almost surely to zero. Since $\sum_{k' \in [K]}  \pi_{k'} \alpha_{k_1, k'} > 0$  both $\sum_{k' \in [K]} \hat \pi_{k'} \alpha_{k_1, k'} > 0$ and $\sum_{k' \in [K]} \hat \pi_{k'} \alpha_{k_1, k'} + \eps_2 > 0$ for sufficiently large $n$. Finally, 
\begin{equation}
   \max_V \textstyle \big\| \sum_{k_2} \hat \pi_{k_2} \alpha_{k_1, k_2} h_{k_2}\big\|_2 \le  \sum_{k_2} \hat \pi_{k_2} \alpha_{k_1, k_2}\big\{\max_V\| h_{k_2}\|_2 \big\} \le \sum_{k_2} \hat \pi_{k_2} \alpha_{k_1, k_2}\,.
\end{equation}
Thus we have 
\begin{equation}
    \max_V\textstyle  \Big\| \frac{\frac1n\sum_{(k_2, i_2)}
     e_{(k_1, i_1), (k_2, i_2)}  v_{k_2, i_2}}{\frac{d(k_1, i_1)}n}  - \frac{\sum_{k_2} \hat \pi_{k_2} \alpha_{k_1, k_2} h_{k_2}}{\sum_{k' \in [K]} \hat \pi_{k'} \alpha_{k_1, k'}} \Big\|_2 \stackrel{\textit{a.s.}}{\to} 0
\end{equation} which proves the lemma that
\begin{equation}
    \max_V \big|\bL_{\text{{linear}}}(V) -  \widetilde {\bL}_{\text{{linear}}}(V)\big|  \stackrel{\textit{a.s.}}{\to} 0
\end{equation}

\end{proof}




\section{Supplement for SBM simulation in Section \ref{sec:SMB-simulation}}
\label{sec:supp_sbm}
We generate an SBM dataset using \texttt{graspologic.simulations.sbm} function. We obtain the representation by optimizing the node2vec loss in eq. \eqref{eq:node2vec} using a gradient descend algorithm that has step size for $t$-th step as $\text{lr}_t = 0.1 \times {t}^{-0.2}$  and $T = 30000$ as the number of iterations. 

\end{document}